%% file: main.tex
\documentclass{article}

\input{math_commands.tex}

\usepackage{microtype}
\usepackage{graphicx}
\usepackage{subcaption}
\usepackage{makecell}
\usepackage{booktabs} %

\usepackage{hyperref}

\usepackage{enumitem}

\usepackage[accepted]{icml2026}

\usepackage{amsmath}
\usepackage{amssymb}
\usepackage{mathtools}
\usepackage{amsthm}

\usepackage{xcolor}
\usepackage{wrapfig}
\usepackage{multirow}
\usepackage{capt-of}

\usepackage[capitalize,noabbrev]{cleveref}

\usepackage{listings}
\definecolor{codegreen}{rgb}{0,0.6,0}
\definecolor{codegray}{rgb}{0.5,0.5,0.5}
\definecolor{codepurple}{rgb}{0.58,0,0.82}
\definecolor{backcolour}{rgb}{0.97,0.97,0.97}
\lstdefinestyle{mystyle}{
    backgroundcolor=\color{backcolour},   
    commentstyle=\color{codegreen},
    keywordstyle=\color{magenta},
    numberstyle=\tiny\color{codegray},
    stringstyle=\color{codepurple},
    basicstyle=\ttfamily\footnotesize,
    breakatwhitespace=false,         
    breaklines=true,                 
    captionpos=b,                    
    keepspaces=true,                 
    numbers=right,                    
    numbersep=5pt,                  
    showspaces=false,                
    showstringspaces=false,
    showtabs=false,                  
    tabsize=2
}
\theoremstyle{plain}
\newtheorem{theorem}{Theorem}[section]

\theoremstyle{definition}

\theoremstyle{remark}

\usepackage[textsize=tiny]{todonotes}

\newcommand*{\Perm}[2]{P({#1}, {#2})}%
\newcommand{\X}{\mathbf{X}}
\newcommand{\until}{\mathbf{U}}

\usepackage{adjustbox}
\newcolumntype{R}[2]{%
    >{\adjustbox{angle=#1,lap=\width-(#2)}\bgroup}%
    l%
    <{\egroup}%
}
\newcommand*\rot{\multicolumn{1}{R{60}{1em}}}%

\icmltitlerunning{Names Don’t Matter: Symbol-Invariant Transformer for Open-Vocabulary Learning}

\begin{document}

\twocolumn[
  \icmltitle{Names Don’t Matter:\\Symbol-Invariant Transformer for Open-Vocabulary Learning}

  \icmlsetsymbol{equal}{*}

  \begin{icmlauthorlist}
    \icmlauthor{\.{I}lker I\c{s}{\i}k}{yyy}
    \icmlauthor{Wenchao Li}{yyy}
  \end{icmlauthorlist}

  \icmlaffiliation{yyy}{Department of Electrical \& Computer Engineering, Boston University, USA}

  \icmlcorrespondingauthor{\.{I}lker I\c{s}{\i}k}{iilker@bu.edu}
  \icmlcorrespondingauthor{Wenchao Li}{wenchao@bu.edu}

  \icmlkeywords{Machine Learning, ICML}

  \vskip 0.3in
]

\printAffiliationsAndNotice{}  %

\begin{abstract}
\input{sections/abstract}

\end{abstract}

\input{sections/introduction_v2}
\input{sections/related}  %
\input{sections/preliminaries}  %
\input{sections/method}

\input{sections/experiments}
\input{sections/limitations}
\input{sections/conclusion}

\section*{Acknowledgement}
The authors thank the anonymous reviewers for their constructive feedback and suggestions. 
This work was supported in part by the U.S. National Science Foundation under grant CCF-2340776.

\section*{Impact Statement}
This paper presents work whose goal is to advance the field of Machine Learning. 
There are many potential societal consequences of our work, none of which we feel must be specifically highlighted here.

\bibliography{main}
\bibliographystyle{icml2026}

\newpage
\appendix
\onecolumn

\input{appendices/psuedocodes}
\input{appendices/proof}
\input{appendices/logic}
\input{appendices/hyperparam}
\input{appendices/copying}
\input{appendices/imbalance}
\input{appendices/ablation}

\input{appendices/llm}
\input{appendices/topn}
\input{appendices/computational_cost}
\input{appendices/finetuning}

\end{document}

%% file: math_commands.tex
\usepackage{amsmath,amsfonts,bm}

\def\eqref#1{equation~\ref{#1}}
\def\Eqref#1{Equation~\ref{#1}}

\def\1{\bm{1}}

\def\va{{\bm{a}}}
\def\vb{{\bm{b}}}

\def\vv{{\bm{v}}}

\def\vx{{\bm{x}}}
\def\vy{{\bm{y}}}

\def\mU{{\bm{U}}}

\DeclareMathAlphabet{\mathsfit}{\encodingdefault}{\sfdefault}{m}{sl}
\SetMathAlphabet{\mathsfit}{bold}{\encodingdefault}{\sfdefault}{bx}{n}

\def\sP{{\mathbb{P}}}

\def\sU{{\mathbb{U}}}
\def\sV{{\mathbb{V}}}

%% file: sections/abstract.tex
Current neural architectures lack a principled way to handle interchangeable tokens, i.e., symbols that are semantically equivalent yet distinguishable, such as bound variables.
As a result, models trained on fixed vocabularies often struggle to generalize to unseen symbols, even when the underlying semantics remain unchanged.
We propose a novel Transformer-based mechanism that is provably invariant to the renaming of interchangeable tokens.
Our approach employs parallel embedding streams to isolate the contribution of each interchangeable token in the input, combined with an aggregated attention mechanism that enables structured information sharing across streams.
Experimental results confirm the theoretical guarantees of our method and demonstrate substantial performance gains on open-vocabulary tasks that require generalization to novel symbols.
Project page: \url{https://bu-depend-lab.github.io/Symbol-Invariant-Transformer/}

%% file: sections/introduction_v2.tex
\section{Introduction}
\label{sec:intro}

Recent advances show that Transformer-based architectures can successfully tackle tasks once thought to require explicit symbolic manipulation, such as theorem proving~\citep{Han2021ProofAC}, mathematical reasoning~\citep{Rabe2020MathematicalRV}, and temporal logic solving~\citep{deepltl}.
These successes in language modeling over symbolic structures are supported by theoretical results showing that Transformers can represent the computation of any finite-state automaton~\citep{liu2023transformers}.
However, despite strong empirical performance and growing theoretical understanding, current neural architectures still lack a principled treatment of \textit{open-vocabulary} reasoning, a core challenge in many symbolic domains.

A defining feature of symbolic systems is that many tokens are \textit{interchangeable}: syntactically distinct yet semantically equivalent up to renaming.
This arises naturally in logic, programming languages, and mathematics, where symbols act as placeholders rather than carriers of intrinsic meaning.
In lambda calculus, for example, $\lambda x.x+1$ and $\lambda y.y+1$ are semantically identical; similarly, logic formulas differing only in variable names or atomic propositions are considered equivalent.
This property, known as \textit{alpha-equivalence}, means renaming interchangeable tokens should not affect meaning, yet neural models trained on fixed vocabularies tend to overfit to specific symbol identities and struggle with renamed variables~\citep{orvalho2026largelanguagemodelsrobust,le2025namesdisappear,wang2024howdoes}.

Beyond robustness to renaming, a more fundamental challenge arises in open-vocabulary settings: when test-time symbols extend beyond the training vocabulary, fixed embedding tables have no representation for novel tokens.
Standard approaches force a trade-off: embeddings must encode token identity to distinguish symbols, yet any reliance on identity both undermines invariance to renaming and blocks generalization to unseen tokens.
This tension motivates the need for architectures that are \textit{invariant by construction to permutations of interchangeable symbols}.

Linear Temporal Logic (LTL)~\citep{Pnueli77}, widely used in formal verification to specify temporal system properties, illustrates this challenge concretely: while Transformers can generate LTL witnesses and generalize to longer sequences~\citep{deepltl,treepos}, generalization to larger or unseen proposition sets remains problematic, since atomic propositions are semantically interchangeable under renaming.
LTL is one instance of a broader phenomenon; the same issue arises in any symbolic domain where alpha-equivalence plays a central role.

A recent line of work addresses this by constructing random embeddings for interchangeable tokens~\citep{isk2025interchangeable}, enabling post-training vocabulary expansion and empirically improving robustness to renaming.
However, the dependence on stochasticity for token discrimination provides no formal invariance guarantees (different random seeds can yield different predictions on alpha-equivalent inputs) and requires careful hyperparameter tuning.

We propose a \textbf{Symbol-Invariant Transformer} that achieves \textit{exact invariance} to token renaming by design rather than by statistical encouragement.
The architecture maintains parallel embedding streams, one per interchangeable token, processed by shared Transformer layers, yielding a provable guarantee that alpha-equivalent inputs always produce corresponding outputs.

We summarize our main contributions as follows:
\begin{itemize}[itemsep=-3pt,topsep=-1pt,leftmargin=12pt]
\item \textbf{Architecture}. We propose a novel symbol-invariant Transformer with parallel embedding streams (one per interchangeable token) combined through an aggregated attention mechanism.
Our method introduces only a small number of additional hyperparameters and can be implemented as a lightweight modification to standard Transformer encoder--decoder architectures.
It's even possible to convert an existing vanilla Transformer model into our architecture with minimal fine-tuning.
\item \textbf{Theory}. Our architecture provides a formal guarantee of invariance to variable renaming: the model's outputs are identical for any pair of alpha-equivalent inputs, ensuring alpha-equivalence by construction.
\item \textbf{Empirics}. We validate the approach on open-vocabulary symbolic reasoning tasks, demonstrating strong generalization and state-of-the-art performance, including outperforming GPT-5.2 on LTL witness generation.
\end{itemize}

%% file: sections/related.tex
\section{Related Work}
\label{sec:related}

\textbf{Variable renaming.}
\citet{orvalho2026largelanguagemodelsrobust} demonstrate that large language model (LLM) performance on coding tasks degrades by up to 70\% under semantics-preserving mutations, one of which is variable renaming---a phenomenon studied more extensively in prior work~\citep{le2025namesdisappear,barone2023thelarger,wang2024howdoes}.
These studies consistently observe significant performance drops, suggesting that LLMs rely heavily on surface-level naming patterns rather than underlying semantics.
This sensitivity to names is not confined to code: in natural language, the choice of proper nouns has been shown to affect accuracy on spatial reasoning tasks~\citep{jha2022responsible}.
Since the standard embedding tables inherently encode name-dependent biases by assigning distinct learned representations to distinct token identities, this vulnerability cannot be remedied by data augmentation or prompting alone; resolving it requires architectural changes that enforce invariance by construction.

\textbf{Symbol invariance \& open-vocabulary learning.}
\citet{isk2025interchangeable} introduced a metric for measuring robustness against variable renaming and proposed a novel embedding strategy using random vectors to handle interchangeable tokens, enabling post-training vocabulary expansion.
However, as discussed previously, reliance on randomness leads to problems such as lack of theoretical guarantees.
\citet{ankner2023renamer} proposed a transformer architecture that is provably invariant to variable renaming without randomness, though vocabulary expansion was not considered in their work.
\citet{Olsk2019PropertyIE} addressed symbol invariance in automated reasoning, but their focus was limited to graph neural networks and did not consider sequence-to-sequence tasks.
Related work has also explored domain-specific vocabulary extensions using auxiliary information such as dictionary definitions or gesture components~\citep{def2vec,sign-lang}, as well as general permutation invariance in transformers~\citep{lee2019settransformerframeworkattentionbased,xu2024permutationequivariancetransformersapplications}.
While these papers tackle similar challenges, they do not address the specific problem of interchangeable tokens.

\textbf{Vocabulary generalization in other domains.}
In computer vision, open-vocabulary image classification models are trained to recognize unseen classes at inference time~\citep{clip,wu2024openvocabularylearningsurvey,tan2024explainconceptconceptbottleneck}.
However, these methods are not applicable to interchangeable tokens as they rely on semantic relationships between seen and unseen classes derived from large-scale pre-training. %

\textbf{Lexical generalization.}
In natural language processing, lexical generalization has been studied to evaluate models' ability to generalize to novel words or phrases based on learned linguistic patterns~\citep{kim2020cogs,Bandel2022LexicalGI,Kumon2024EvaluatingSG}. %
Unlike our setting, lexical generalization emerges as a byproduct of training sequence-to-sequence models on large and diverse datasets, whereas post-training vocabulary expansion for interchangeable tokens requires explicit architecture and embedding design.

%% file: sections/preliminaries.tex
\section{Preliminaries}
\label{sec:prelim}

\subsection{Alpha-Equivalence}
\label{sec:prelim:alpha-equivalence}

Many formal reasoning tasks involve interchangeable tokens such as bound variables or atomic propositions.
Renaming these tokens does not change meaning.
This property is known as \textit{alpha-equivalence}.
Formally, let the vocabulary be partitioned as $\sV = \sV_i \cup \sV_n$, where $\sV_i$ collects interchangeable tokens that can be renamed among themselves and $\sV_n$ contains tokens with fixed identities.
An \textit{alpha-renaming} is any bijection $f \colon \sV \to \sV$ that leaves $\sV_n$ unchanged and permutes $\sV_i$; applying $f$ tokenwise to sequences yields alpha-converted input-output pairs.
Two pairs are deemed \textit{alpha-equivalent} if one can be obtained from the other via such a renaming function $f$.
Since these pairs are semantically equivalent, a model that properly handles interchangeable tokens should produce consistent outputs across all such alpha-converted inputs.

Identifying $\sV_i$ in practice is straightforward and domain-specific. For example, $\sV_i$ corresponds to atomic propositions in logic, bound variables in programming languages, and universally quantified variables in theorem proving.

\subsection{Problem Definition}

The objective is to construct an embedding and representation scheme that is inherently robust to alpha-renaming while enabling post-training expansion of the interchangeable portion of the vocabulary.
Formally, after training on $\sV = \sV_i \cup \sV_n$, the model should operate on an augmented vocabulary $\sV' = \sV_i' \cup \sV_n$ with $\sV_i \subset \sV_i'$, handling unseen interchangeable tokens at inference time.
This setting motivates designs that preserve invariance across alpha-renamings yet maintain sufficient separability among interchangeable tokens to support accurate prediction.

\subsection{Language Models and Transformers}

The transformer architecture~\citep{vaswani} revolutionized autoregressive language modeling by introducing a parallelizable attention mechanism that computes query, key, and value vectors from input embeddings to weigh token importance and capture long-range dependencies.
The encoder-decoder transformers employ self-attention and cross-attention, use positional encodings to provide sequence structure, and apply attention masking during training to ensure causality.
In this architecture, we employ three-way weight tying, whereby the final projection matrix and the embedding matrices of encoder and decoder are tied~\citep{weight-tying}.

\textbf{Cosine loss.}
\citet{isk2025interchangeable} apply feature normalization before the final projection to further improve learning.
Given the output of the last layer $\vv$ (the feature vector), instead of directly computing $\mU \vv$, we normalize it as $\mU f_{fn}(\vv)$.
Since $\va \cdot \vb = \|\va\| \|\vb\| \cos(\theta)$ where $\theta$ is the angle between $\va$ and $\vb$, normalizing both the embeddings and the feature vector reduces the logits to cosine similarity, forcing the model to distinguish between tokens based solely on directions.
This approach, known as cosine loss in the literature, has been successful in face recognition~\citep{l2face, normface}.
The prior work employs the AdaCos loss function~\citep{adacos} that scales the logits adaptively throughout training, adapting it for sequence modeling by treating sequence length as a batch dimension.
We also use this technique in our experiments.

%% file: sections/method.tex
\begin{figure*}[t]
  \centering
  \includegraphics[width=\textwidth]{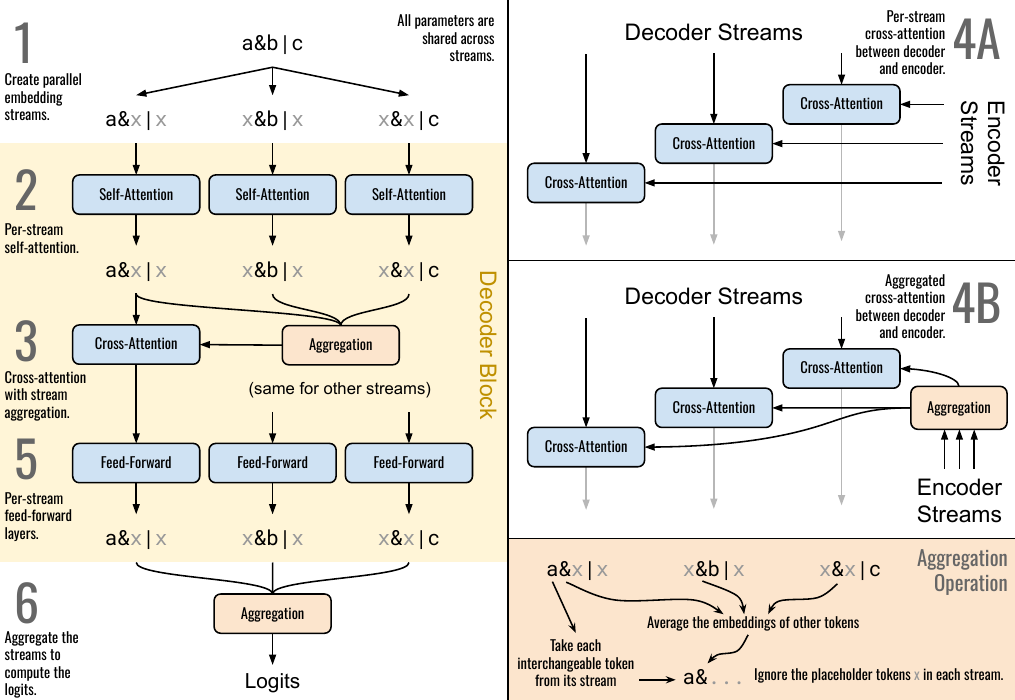}
  \caption{ 
    Method overview.
    The numbered sections correspond to: (1) Creation of parallel embedding streams with actual and placeholder embeddings for interchangeable tokens, (2) per-stream self-attention over individual streams, (3) aggregated attention between each stream and the fused view, (4A/4B) cross-attention between decoder streams and encoder streams with per-stream or aggregated modes, (5) applying feed-forward networks per stream, and (6) projection that averages base token logits and preserves interchangeable token logits. The bottom right section illustrates the aggregation process that fuses parallel streams into a single view while restoring true embeddings at interchangeable token positions. All parameters are shared across streams, which enables post-training vocabulary extension. Common transformer components (e.g., layer normalization, residual connections) are omitted for clarity.
  }
  \label{fig:method}
\end{figure*}

\section{Proposed Method}

We present a novel neural architecture for handling interchangeable tokens -- semantically equivalent symbols that are distinguishable from each other, such as variable names or atomic propositions (see \cref{fig:method}). Our method maintains $k$ parallel embedding streams, one for each interchangeable token, enabling the model to process each token's context independently while sharing information across streams. The approach combines three key ideas: (1) parallel embedding with actual and placeholder representations, (2) dual self-attention over individual streams and aggregated views, and (3) projection that averages non-interchangeable token logits while preserving interchangeable token distinctions. Details are provided in Appendix~\ref{sec:algorithms}.

\subsection{Creation of Streams and Per-Stream Operations}

For each of the $k$ interchangeable tokens, we create a parallel embedding stream by replacing: (1) positions containing that token with an \emph{actual} embedding index, (2) positions containing other interchangeable tokens with a \emph{placeholder} embedding index, and (3) base tokens remain unchanged. This yields $k$ parallel embeddings that capture different ``views" of the input, each centered on a different interchangeable token. A binary mask tracks which positions contain each token for later use. Details appear in \cref{alg:embed}.

Within each stream, we apply per-stream self-attention to allow independent context building, followed by a standard feed-forward network. Layer normalization and residual connections follow standard Transformer design. These per-stream operations are parameterized identically across all streams, i.e., all $k$ streams share the same weights for attention and feed-forward layers. This parameter sharing enables post-training vocabulary extension, where new interchangeable tokens can be added without retraining.

\subsection{Aggregation and Projection}

To enable information sharing across streams, we aggregate them into a single view by averaging hidden states across all $k$ streams, then restoring the true hidden state from stream $i$ at positions where interchangeable token $i$ appears. This approach combines distributed context from all streams while preserving the specialized representations at interchangeable token positions.
The projection step uses a similar aggregation: base tokens are averaged across all $k$ streams (equally valid from all perspectives), while interchangeable token $i$'s logit comes from stream $i$ (its specialized stream).
Algorithms~\ref{alg:aggregate-aps} and \ref{alg:project} provides details for these steps.

\subsection{Overall Architecture}

The encoder layer is composed of per-stream self-attention, aggregated attention, and feed-forward operations (steps 2, 3, and 5 in \cref{fig:method}).
The decoder extends the encoder with causal masking and cross-attention to encoder outputs. Like the encoder, it applies per-stream self-attention followed by aggregated-view attention. The cross-attention can operate in either per-stream mode (each decoder stream attends to its corresponding encoder stream) or aggregated mode (all streams attend to aggregated encoder representations), which correspond to 4A and 4B in \cref{fig:method}.
Algorithms~\ref{alg:encoder-layer} and \ref{alg:decoder-layer} detail these layers.
Note that this architecture extends naturally to decoder-only autoregressive models with the removal of cross-attention to encoder outputs.

\subsection{Theoretical Guarantees}

We now formally establish that the proposed architecture guarantees alpha-equivalence by construction.

\begin{theorem}[Alpha-Renaming Invariance]
\label{thm:alpha-invariance}
Let $M$ denote the proposed model and $f \colon \sV \to \sV$ be an alpha-renaming function that fixes $\sV_n$ and permutes $\sV_i$. For any input sequence $\vx$, let $\hat{\vy} = M(\vx)$ denote the model's output. Then for the alpha-renamed input $\vx' = f(\vx)$ obtained by applying $f$ tokenwise, the model produces output $\hat{\vy}' = M(\vx')$ such that
\begin{equation}
f^{-1}(\hat{\vy}') = \hat{\vy}.
\end{equation}
Equivalently, $\hat{\vy}' = f(\hat{\vy})$, i.e., the model's prediction on alpha-renamed input is the alpha-renamed version of the original prediction.
\end{theorem}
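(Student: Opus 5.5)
The plan is to prove a stronger equivariance statement about the internal representations and then read off \cref{thm:alpha-invariance} from the projection step. Let $\sV_i=\{v_1,\dots,v_k\}$. Since $f$ fixes $\sV_n$ and permutes $\sV_i$, there is a permutation $\pi$ of $\{1,\dots,k\}$ with $f(v_j)=v_{\pi(j)}$. The central claim is that the whole network, viewed as a map on the tensor of $k$ streams, is equivariant under relabeling the stream index by $\pi$. Concretely, if $H_j^{(\ell)}(\vx)$ denotes the hidden state of stream $j$ after layer $\ell$ on input $\vx$, then
\begin{equation}
H^{(\ell)}_{\pi(j)}(f(\vx)) = H^{(\ell)}_j(\vx)\quad\text{for all } j,\ell,
\end{equation}
with the analogous identity for the per-stream masks, and for decoder streams as well.

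\textbf{Base case (embedding, \cref{alg:embed}).} Stream $j$ on $\vx$ is built by placing the \emph{actual} index at positions holding $v_j$, the \emph{placeholder} index at positions holding any other interchangeable token, and the unchanged base tokens elsewhere. In $f(\vx)$, the positions holding $v_{\pi(j)}$ are exactly the positions holding $v_j$ in $\vx$. The positions holding other interchangeable tokens, and the base tokens themselves, are unchanged because $f$ fixes $\sV_n$. So stream $\pi(j)$ of $f(\vx)$ is token-for-token identical to stream $j$ of $\vx$. The masks satisfy the same relation, and positional encodings are unaffected.

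\textbf{Inductive step.} I go layer by layer.
\begin{itemize}
\item Per-stream self-attention, feed-forward layers, layer norms and residuals use weights shared across streams and act on each stream independently. They therefore commute with any permutation of the stream index.
\item Aggregation averages hidden states over all $k$ streams, which is invariant under reordering. It then overwrites position $p$ with the state of stream $j$ whenever mask $j$ is set at $p$. By the inductive hypothesis and the mask relation, at such a position in $f(\vx)$ the active stream is $\pi(j)$, and its state equals $H_j(\vx)$. Hence the fused view of $f(\vx)$ equals the fused view of $\vx$ exactly, and aggregated attention, which combines each stream with this common fused view using shared weights, is again equivariant.
\item Cross-attention in per-stream mode pairs decoder stream $j$ with encoder stream $j$, and both relabel by the same $\pi$. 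In aggregated mode it uses the invariant fused encoder view.
\item For the decoder, I induct jointly over layers and over decoding steps. The decoder's prefix of previously generated tokens is assumed to be renamed by $f$, which is justified by the inductive hypothesis on earlier outputs, so the decoder's base case is identical to the encoder's.
\end{itemize}

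\textbf{Projection (\cref{alg:project}).} Logits for base tokens are averaged across the $k$ streams, so they are identical for $\vx$ and $f(\vx)$. The logit for $v_{\pi(j)}$ on $f(\vx)$ comes from stream $\pi(j)$, which equals the stream-$j$ feature on $\vx$. Since the embeddings are tied and the actual embedding is the same shared vector in every stream, that logit equals the logit of $v_j$ on $\vx$. The output distribution on $f(\vx)$ is therefore the pushforward under $f$ of the distribution on $\vx$. With any deterministic, symmetric decoding rule (greedy decoding, assuming ties are broken consistently, or more generally any rule equivariant under relabeling), the next token is $f$ applied to the original next token. Induction over decoding steps then gives $\hat{\vy}'=f(\hat{\vy})$, and hence $f^{-1}(\hat{\vy}')=\hat{\vy}$.

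\textbf{Main obstacle.} I expect the hardest step to be the aggregation and its interaction with the masks, including how masks propagate in the decoder. The proof needs the overwrite rule to depend only on the mask and stream index and not on any fixed ordering of streams. It also needs at most one stream to claim each position, which holds because each position holds a single token. A secondary issue is tie-breaking in argmax when logits coincide: I will state the theorem for the output distribution and add a remark that greedy outputs inherit equivariance under index-independent tie-breaking.
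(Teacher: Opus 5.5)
Your proposal is correct and follows essentially the same route as the paper's proof. Renaming permutes the streams, per-stream operations commute with that permutation, the aggregation (mean plus mask-based restoration) yields the same fused view, and the projection reads the logit of $f(v)$ on $f(\vx)$ from the stream that equals $v$'s stream on $\vx$. What you add is detail the paper's proof leaves implicit, not a different argument: the explicit joint induction over layers and decoding steps, the treatment of both cross-attention modes, and the caveat that the sequence-level claim needs greedy or beam tie-breaking that does not depend on token labels.
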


\textbf{Intuition.} The key insight is that alpha-renaming merely permutes the order of the $k$ parallel streams without altering the computations within or across them. Specifically, if alpha-renaming $f$ maps interchangeable token $i$ to position $j$ in the vocabulary, then stream $i$ in the original computation corresponds exactly to stream $j$ in the renamed computation.
All operations in our architecture fall into two categories: \emph{per-stream operations} (self-attention, feed-forward networks, layer normalization) and \emph{aggregated operations} (averaging hidden states across streams). Per-stream operations are unaffected by stream reordering because each stream processes its input independently using shared parameters. Aggregated operations remain unchanged because they rely on extraction of each interchangeable token from its corresponding stream (which is unaffected by permutation) and summation, which is commutative: $\sum_{i=1}^k \vv_i = \sum_{i=1}^k \vv_{\pi(i)}$ for any permutation $\pi$.
The full proof is provided in \cref{app:proof-alpha-invariance}.

%% file: sections/experiments.tex
\section{Experiments}
\label{sec:experiments}

\subsection{Tasks and Evaluation Metrics}
\label{sec:experiments:tasks}

Following prior work~\citep{isk2025interchangeable}, we evaluate the proposed method across three distinct tasks that test vocabulary generalization capabilities in different contexts.

\textbf{Copying with scalable vocabulary.}
Copying sequences has been used as a toy task in the literature to evaluate sequence modeling capabilities~\citep{rope}, especially for assessing the generalization capability to longer sequences in the context of positional encodings.
The prior random embedding method~\citep{isk2025interchangeable} formulated a new variant of the copying task to evaluate vocabulary generalization jointly with sequence length generalization.
The experimental procedure involves testing the model on sequences that contain tokens outside the training vocabulary, in addition to being longer than the training sequences.
The results are presented in Appendix~\ref{app:experiments:copying}.

\textbf{Propositional logic assignment prediction.}
This task requires generating satisfying assignments for propositional formulae composed of atomic propositions and logical operators including negation, conjunction, disjunction, equivalence, and exclusive-or (see Appendix~\ref{app:prop} for formal definitions).
Given a formula, the model must predict a sequence of atomic propositions (APs) and their Boolean values that collectively satisfy the formula.
The task permits partial assignments where only a subset of propositions receives explicit values.
Our evaluation is based on PropRandom35 from DeepLTL~\citep{deepltl}, and we leverage \verb|pyaiger|~\citep{pyaiger} to validate the correctness of predicted assignments.

\textbf{LTL (Linear Temporal Logic) witness generation.}
This task extends propositional logic with temporal operators expressing properties over time (see Appendix~\ref{app:ltl} for formal definitions).
Given an LTL formula, the model must generate a symbolic trace consisting of a finite prefix followed by a repeating suffix that satisfies the formula's temporal constraints.
Although LTL is technically a superset of propositional logic, the propositional logic variant studied here incorporates derived operators (equivalence and exclusive-or) that are absent in our LTL formulation (for consistency with prior work).
Consequently, these tasks pose distinct reasoning challenges: propositional logic emphasizes relational reasoning between APs through derived operators, while LTL primarily demands temporal reasoning.
Our evaluation is based on LTLRandom35 from DeepLTL~\citep{deepltl}, and model predictions are validated using \verb|spot| \verb|2.11.6|~\citep{spot}.

\subsubsection{Alpha-Covariance}
\label{sec:experiments:alpha-covariance}

To measure the degree of robustness against alpha-renaming, we employ the alpha-covariance metric introduced by \citet{isk2025interchangeable}.
As defined in Section~\ref{sec:prelim:alpha-equivalence}, alpha-equivalent pairs can be generated through systematic renamings of interchangeable tokens.
Given $V_i = |\sV_i|$ interchangeable tokens and an input-output pair $(\vx, \vy)$ containing $k$ distinct occurrences from $\sV_i$, there exist $\Perm{V_i}{k} = V_i! / (V_i-k)!$ alpha-equivalent pairs generated through permutations.

The alpha-covariance metric quantifies consistency across these alpha-equivalent inputs.
Let $\sP = \{ (\vx^1,\vy^1), \ldots, (\vx^n,\vy^n) \}$ be $n$ input-output pairs alpha-equivalent to $(\vx, \vy)$, with $\alpha_i$ denoting the corresponding alpha-renaming satisfying $\alpha_i(\vx) = \vx^i$ and $\alpha_i(\vy) = \vy^i$.
For each input $\vx^i \in \sP$, the model generates prediction $\hat{\vy}^i$.
Define $\sU = \{ \alpha_i^{-1}(\hat{\vy}^i) \mid 1 \leq i \leq n \}$ as the set of predictions with alpha-conversions undone.
Ideally, $|\sU| = 1$ since all predictions should collapse to the same output when conversions are reversed.
The alpha-covariance is computed as:
\begin{equation}
  \label{eq:alpha-covariance-prelim}
  1 - \frac{|\sU| - 1}{|\sP| - 1}
\end{equation}
An alpha-covariance of 1 indicates perfect invariance to alpha-conversions, while 0 indicates maximum sensitivity (every prediction differs after undoing conversions).

\subsection{Experimental Setup and Baselines}
\label{sec:experiments:setup}

\textbf{Architecture.}
We employ a transformer-based encoder-decoder architecture throughout our evaluation suite.
Weight sharing is applied between encoder and decoder embeddings to maintain consistency in representation dimensionality.
Complete hyperparameter specifications are documented in Table~\ref{hyperparam-table} located in Appendix~\ref{app:hyperparam}.
RoPE~\citep{rope} serves as the default positional mechanism across all experiments.
In the logic-based tasks, tree-positional encoding is utilized in the encoder, and beam search with beam width $k=3$ is applied during generation.

\textbf{Baseline methods.}
We compare our method against three baselines.
The \textbf{full vocabulary baseline} sidesteps the vocabulary generalization challenge by utilizing fixed learned embeddings trained on a training set with expanded vocabulary that matches the test set size.
This baseline represents an upper bound on the performance of a standard transformer model achievable through vocabulary expansion via training data augmentation.
The \textbf{alpha-renaming baseline} applies fixed learned embeddings with an expanded vocabulary that matches the test set size on the original training set.
The \textbf{random embedding baseline} employs the dual-part random embedding approach from prior work~\citep{isk2025interchangeable}, which uses embeddings with two components: a shared learnable part to convey semantic equivalence and a randomly-generated part to distinguish between interchangeable tokens.
In each task, we use the best hyperparameter configuration identified in the prior work.

\begin{table*}[t]
  \def\arraystretch{1.2}
  \begin{minipage}{.75\textwidth}
\caption{
	Evaluation of baselines and the proposed method under different perturbations.
	The alpha-renaming baseline uses 5 AP embeddings since vocabulary generalization is not evaluated here.
	Columns show: (1) task type, (2-3) training dataset and model type, (4-5) correct predictions and exact matches to the ground truth labels on the test set, and (6-8) mean alpha-covariance across varying AP counts, computed on all alpha-equivalent permutations of 1000 test samples (100 for GPT-5.2).
  }
  \label{perturbation-table}
  \begin{small}
    \begin{tabular}{lll|cc|rrr}
    \toprule
    \multirow{2}{*}{\rotatebox[origin=c]{90}{\textbf{Task}}} & \textbf{Training} &  & \multicolumn{2}{c|}{\textbf{Evaluation}} & \multicolumn{3}{c}{\textbf{Alpha-Covariance}}\tabularnewline
     & \textbf{Dataset} & \textbf{Model} & \textbf{Correct} & \textbf{Exact} & \textbf{3 AP} & \textbf{4 AP} & \textbf{5 AP}\tabularnewline
    \midrule
    \multirow{10}{*}{\rotatebox[origin=c]{90}{Propositional Logic}}
    & Normal    & Baseline   &         95.62\%  &         57.94\%       &        95.70\%  &      93.69\%  &     76.02\% \tabularnewline
    \cline{2-8}
    & \multirow{4}{*}{Renamed}
      & Baseline   &         41.57\%  &         9.04\%        &        14.96\%  &      16.85\%  &     10.65\% \tabularnewline
    & & Alpha-Renaming &         93.85\%  &         57.24\%       &        99.56\%  &      99.60\%  &     93.23\% \tabularnewline
    & & Random Embedding   &         93.25\%  &         56.45\%       &        99.23\%  &      99.42\%  &     92.98\% \tabularnewline
    & & Proposed   & \textbf{98.03\%}  & \textbf{60.96\%} & \textbf{100.0\%} & \textbf{100.0\%} & \textbf{100.0\%}\tabularnewline
    \cline{2-8}
    & \multirow{4}{*}{Reduced}
      & Baseline   &         63.26\%  &         29.31\%        &        86.21\%  &      75.07\%  &     53.31\% \tabularnewline
    & & Alpha-Renaming &         57.48\%  &         26.77\%       &        97.89\%  &      97.35\%  &     95.06\% \tabularnewline
    & & Random Embedding   &         55.23\%  &         26.28\%       &        97.88\%  &      95.79\%  &     90.18\% \tabularnewline
    & & Proposed   & \textbf{70.43\%}  & \textbf{35.81\%} & \textbf{100.0\%} & \textbf{100.0\%} & \textbf{100.0\%}\tabularnewline
    \cline{2-8}
    & Pretrained    & GPT 5.2 &         99.73\%  &         25.60\%       &         42.97\%  &         29.87\%  &         1.03\% \tabularnewline
    \midrule
    \multirow{10}{*}{\rotatebox[origin=c]{90}{LTL}}
             & Normal    & Baseline &         98.23\%  &         83.23\%       &         96.87\%  &         95.86\%  &         91.80\% \tabularnewline
    \cline{2-8}
             & \multirow{4}{*}{Renamed} & Baseline &         34.13\%  &         12.12\%       &         64.93\%  &         57.99\%  &         40.91\% \tabularnewline
             & & Alpha-Renaming &        {97.96\%} &        {77.66\%} &        {99.55\%} &        {99.49\%} &        {98.86\%} \tabularnewline
             & & Random Embedding &         95.94\%  &         76.45\%       &         97.66\%  &         97.76\%  &         98.29\% \tabularnewline
             & & Proposed & \textbf{98.24\%}  & \textbf{79.65\%}     & \textbf{100.0\%} & \textbf{100.0\%} & \textbf{100.0\%} \tabularnewline
    \cline{2-8}
             & \multirow{4}{*}{Reduced}   & Baseline &         87.47\%  &         63.61\%       &         94.37\%  &         91.70\%  &         85.64\% \tabularnewline
             & & Alpha-Renaming &       {89.50\%} &       {64.15\%}      &       {99.02\%} &       {98.67\%} &       {97.82\%}\tabularnewline
             & & Random Embedding &         87.32\%  &         59.04\%       &         97.94\%  &         96.12\%  &         94.34\% \tabularnewline
             & & Proposed & \textbf{93.46\%} & \textbf{68.63\%}      & \textbf{100.0\%} & \textbf{100.0\%} & \textbf{100.0\%}\tabularnewline
    \cline{2-8}
             & Pretrained    & GPT 5.2 &         86.83\%  &         35.93\%       &         81.76\%  &         82.97\%  &         77.56\% \tabularnewline
    \bottomrule
    \end{tabular}
  \end{small}
  \end{minipage}
  \hfill
  \begin{minipage}{.23\textwidth}
\caption{
Ablation results for the propositional logic task.
The first model is the best performing configuration, other models either add (+) or remove (-) components.
The accuracy is reported on the heatmap test set.
  }
  \label{prop-abl-table}
  \begin{center}
  \begin{small}
    \begin{tabular}{lr}
    \toprule
    Best Model & 95.05\% \tabularnewline
    \midrule
    +CA & 92.66\% \tabularnewline
    -CP+CA & 28.51\% \tabularnewline
    -EA & 92.47\% \tabularnewline
    -DA & 84.48\% \tabularnewline
    -EA-DA & 72.35\% \tabularnewline
    -EP & 91.44\% \tabularnewline
    -DP & 46.55\% \tabularnewline
    \bottomrule
    \end{tabular}
  \end{small}
  \end{center}
  \vskip 0.1in
\caption{
LTL ablation results.
  }
  \label{ltl-abl-table}
  \begin{center}
  \begin{small}
    \begin{tabular}{lr}
    \toprule
    Best Model & 90.47\% \tabularnewline
    \midrule
    +CA & 90.27\% \tabularnewline
    -CP+CA & 20.93\% \tabularnewline
    -EA & 84.13\% \tabularnewline
    +DA & 89.47\% \tabularnewline
    -EP & 84.13\% \tabularnewline
    -EP-DP+DA & 20.27\% \tabularnewline
    \bottomrule
    \end{tabular}
  \end{small}
  \end{center}
  \end{minipage}
\end{table*}

\subsection{Dataset Perturbations}
\label{sec:perturbation}

In this experiment, the inductive bias of our method and the baselines is assessed by introducing controlled perturbations to the benchmark datasets.
The first type of perturbation involves renaming the atomic propositions (APs) in the inputs and outputs such that the order of the first AP appearances in the trace is always the same.
Although this modification does not alter the logical structure of the formulas, the models without favorable inductive bias may depend on AP order to make predictions, leading to performance degradation.
The second perturbation type reduces the sample count in the training dataset from 800K to 80K, thereby limiting the exposure of models to diverse AP combinations and formula structures during training.
A helpful inductive bias should enable the model to perform well even with reduced training data, improving sample efficiency.

The results in Table~\ref{perturbation-table} demonstrate that the proposed method consistently outperforms all baselines in each perturbation scenario across both tasks.
Superior performance on the renamed dataset indicates that the proposed method effectively captures the underlying logical structure of the formulas.
Interestingly, the proposed method trained on the renamed dataset even surpasses the baseline trained on the unmodified dataset, while other baselines that enable vocabulary extension do not.
This can be interpreted as a pareto improvement in the bias-variance trade-off (better in-distribution and out-of-distribution performance).
Similarly, the results on the reduced dataset highlight the superior sample efficiency of the proposed method.
The proposed method achieves perfect alpha-covariance across all AP counts in both training regimes, validating \cref{thm:alpha-invariance}.

\subsection{Vocabulary \& Length Generalization}
\label{sec:experiments:generalization}

\begin{figure*}[htbp]
  \centering
  \begin{subfigure}[b]{\textwidth}
    \includegraphics[width=\textwidth]{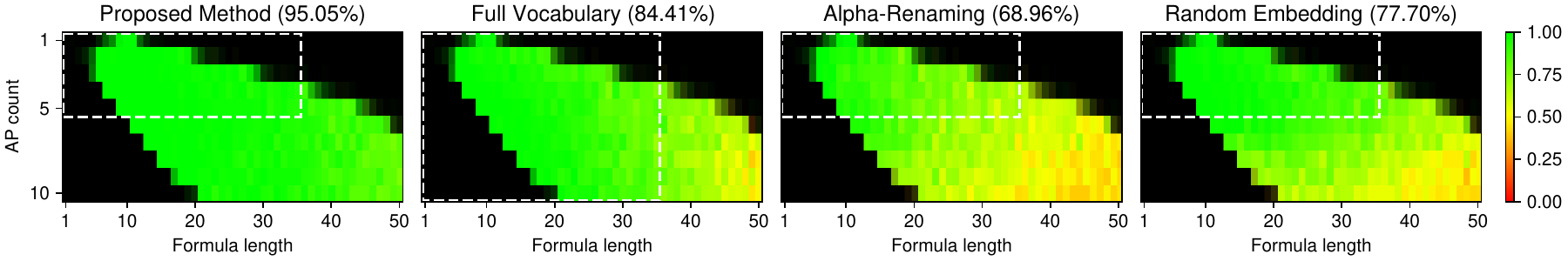}
    \caption{Propositional Logic}
    \label{fig:prop-heatmap}
  \end{subfigure}
  \vskip 0.1in
  \begin{subfigure}[b]{\textwidth}
    \includegraphics[width=\textwidth]{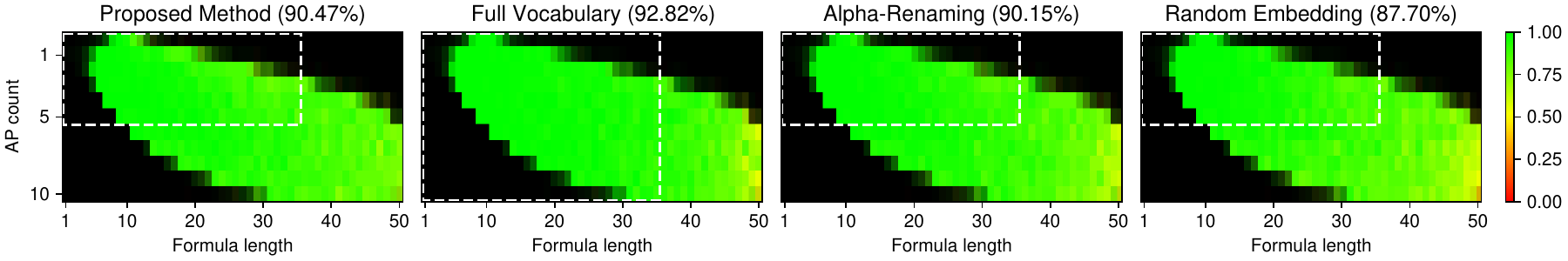}
    \caption{LTL}
    \label{fig:ltl-heatmap}
  \end{subfigure}
  \caption{
	Heatmaps showing prediction accuracy on test sets with varying formula complexity for LTL (top) and propositional logic (bottom) tasks.
	Brightness indicates sample density, with full brightness representing 100 samples.
	The dashed white box marks the training data distribution boundary.
  }
  \label{fig:heatmaps}
\end{figure*}

From a computational standpoint, expanding training coverage by enumerating more atomic propositions (APs) and longer formulas rapidly becomes impractical. The time required to synthesize datasets and validate traces grows steeply -- empirically trending toward exponential in both AP count and formula length, as shown in previous work~\citep{isk2025interchangeable}. Consequently, approaches that generalize to larger AP sets and longer sequences without retraining on matching datasets substantially reduce dataset generation and training costs.
This, combined with the PSPACE-complete complexity of the LTL task, makes such generalization a key driver of practical scalability.

In this section, we assess the performance when confronted with test distributions that extend beyond training coverage.
This experiment uses the test set from prior work~\citep{isk2025interchangeable}, which systematically varies both the number of APs and formula lengths to create a grid of out-of-distribution scenarios.
Each combination of AP count and formula length contains at most 100 test samples.
The maximum formula length in this test set is 50, exceeding the training maximum of 35.
Similarly, the maximum AP count in the test set is 10, surpassing the training maximum of 5.
Although the maximum AP count and the formula length are increased, the probability distribution of operators remains consistent with the training set.
This ensures that the core reasoning task remains unchanged, isolating the evaluation to vocabulary and length generalization.
For the LTL task, we identify and address an imbalance in the training set.
The details are documented in Appendix~\ref{app:ltl-imbalance}.

Figure~\ref{fig:heatmaps} presents heatmaps illustrating model accuracy across varying levels of formula complexity for both tasks.
The propositional logic task reveals substantial performance differences under out-of-distribution generalization, with the proposed method markedly outperforming all baselines, even the full vocabulary approach.
This demonstrates the effectiveness of our approach in capturing the underlying logical structure.
In the LTL task, however, the results are more closely clustered, with all methods achieving similar accuracies.
These performance patterns stem from fundamental differences in operator distributions and reasoning requirements.
The propositional logic task contains relational operators (implication, exclusive-or) that require the model to reason about AP relationships, whereas the LTL task relies on simpler conjunction and disjunction operators in addition to temporal operators.
Consequently, the vocabulary generalization challenge is less pronounced in LTL -- all models perform similarly because temporal reasoning, not AP relationship handling, is the primary bottleneck.
In propositional logic, modeling inter-AP relationships constitutes the major challenge, and this is where the proposed method's inductive bias towards permutation invariance provides the greatest advantage.
The dual-part embedding design in random embeddings~\citep{isk2025interchangeable} also benefits from this distinction: it forces the model to separate common and distinguishable aspects of APs, proving beneficial when inter-proposition relationships dominate.

\subsection{Ablation Analysis}
\label{sec:experiments:ablation}

We ablate various attention components to assess their contributions to overall performance.
We use two letter codes to denote each attention mechanism.
The first letter indicates the block: 'E' for encoder, 'D' for decoder, and 'C' for cross-attention.
The second letter indicates the attention type: 'P' for per-stream and 'A' for aggregated attention.

For each task, we determined the best configuration by hyperparameter tuning on a separate validation set with 10 APs.
The best configuration is EP-DP-EA-DA-CP for propositional logic and EP-DP-EA-CP for LTL.
These configurations form the baseline for ablation studies.
Tables~\ref{prop-abl-table} and~\ref{ltl-abl-table} summarize the ablation results.
For full details on each ablation model and heatmaps, please refer to Appendix~\ref{app:ablation}.

\textbf{Cross-attention.}
A critical finding emerges from comparing per-stream cross-attention (CP) with aggregated cross-attention (CA).
Replacing per-stream cross-attention with aggregated cross-attention results in catastrophic performance degradation, with accuracies plummeting to 20.93\% for LTL and 28.51\% for propositional logic.
This dramatic loss becomes increasingly pronounced as the number of APs increases, empirically demonstrating that without per-stream cross-attention, the decoder streams cannot distinguish which encoder stream they correspond to.
Conversely, adding aggregated cross-attention in addition to per-stream cross-attention results in marginal decreases.

\textbf{Per-stream attention.}
Removing DP leads to significant accuracy drops in both tasks, while removing EP has a smaller impact.
This aligns with the observation that per-stream attention is vital for stream identification, which also explains why DP is more critical since the decoder must accurately identify streams to generate correct outputs.

\textbf{Aggregated attention.}
Removing DA results in a significant accuracy drop to 84.48\%, while removing EA has a minor impact, with accuracy declining only to 92.47\%.
This asymmetry suggests that relational understanding in the decoder is crucial for the propositional logic task.
Note that even when EA is removed, DA still offers an indirect computation path for relating AP tokens originating from the encoder.
When both aggregated attention mechanisms are removed, accuracy drops substantially to 72.35\%, which underlines the importance of aggregated attention for relational reasoning in this task.
Unlike propositional logic, LTL task benefits from the removal of DA, with accuracy slightly increasing from 89.47\% to 90.27\%.
The removal of EA in addition to DA leaves the model without any aggregated attention, yet the accuracy drop to 84.13\% is not as severe as in the propositional logic case.
This confirms that relational reasoning (which aggregated attention primarily supports) is not the critical bottleneck for the LTL task.

\textbf{Overall insights.}
These ablation results demonstrate that both per-stream and aggregated attention mechanisms play essential and complementary roles in the proposed architecture since the best configuration in each task includes both.
However, where these mechanisms should be activated depends on the task.
Per-stream attention is universally critical for stream identification, while aggregated attention's importance varies by task according to the prominence of relational reasoning requirements.
While the best configuration should be determined through hyperparameter tuning for each task, we recommend the configuration EP-DP-EA-DA-CP as a strong default choice.

\subsection{Evaluation Against GPT-5.2}

We benchmark our method against GPT-5.2~\citep{gpt5card}, a state-of-the-art general-purpose large language model, to contextualize the results.
GPT-5.2 was configured with medium reasoning effort and prompted with four few-shot examples per task, covering a range of formula complexities and demonstrating the expected input/output format (Listings~1 and~2, Appendix~\ref{app:llm}).
For the propositional logic task, model outputs were additionally constrained to JSON format via schema validation to minimize formatting errors.
Full prompt details are documented in Appendix~\ref{app:llm}.

We limit the sample count per cell to 10 in the heatmap test set due to financial constraints.
On propositional logic, GPT-5.2 achieves 99.49\% accuracy.
However, on LTL, it achieves only 81.45\% accuracy, substantially underperforming all other methods, which underscores the value of specialized models tailored to specific domains.
Despite superior performance on propositional logic, GPT-5.2's practical utility is constrained by computational overhead.
With medium reasoning effort, GPT-5.2 requires 10--90 seconds per sample, averaging 37 seconds on 387 LTL test samples.
In contrast, our proposed method generates predictions nearly instantaneously on consumer-grade hardware.

\textbf{Top-N accuracy.}
To match GPT-5.2's performance on propositional logic, we consider the top-N accuracy of our method. %
Our model is on par with GPT-5.2 at top-10 and top-25 accuracy, achieving 99.03\% and 99.54\% respectively. %
More details are provided in Appendix~\ref{app:topn}.

\subsection{Computational Cost Analysis}
\label{sec:experiments:efficiency}

A standard attention mechanism has a time complexity of $O(L^2)$, where $L$ is the sequence length.
With $S$ streams, our architecture scales to $O(SL^2)$.
If $S$ were extremely large, the added complexity could make the method impractical.
In practice, however, $S$ is typically much smaller than $L$, and empirically, the method remains practical for $S=10$: average time per sample increases from 3.38 ms to 5.13 ms on propositional logic (Figure~\ref{fig:ap-time-scaling} in Appendix~\ref{app:computational-cost}).
Computing top-25 samples instead of top-3 quadruples the time, which is still instantaneous compared to GPT-5.2.

\textbf{Memory usage.}
Total activation memory grows as $O(SLd)$ since each stream maintains a separate hidden state tensor of shape $L \times d$.
On the LTL heatmap, the memory usage from 5 to 10 APs scales from 1.9 GB to 3.9 GB for the proposed method, and from 3.1 GB to 3.2 GB for the full vocabulary baseline (for a batch size of 64). Note that the proposed model uses less parameters (Table~\ref{hyperparam-table}) since we observed better parameter efficiency during preliminary hyperparameter tuning, alleviating the memory scaling problem.

\subsection{Converting \& Fine-Tuning Pre-Trained Models}
\label{sec:experiments:finetuning}

\begin{table*}[htbp]
  \def\arraystretch{1.1}
  \centering
  \caption{
    Model conversion \& fine-tuning results on the LTL and propositional logic tasks.
    The last row reports the EP-DP-CP model trained from scratch (see Appendix~\ref{app:ablation}).
    The pre-trained baseline uses fixed embeddings trained on 5 APs and cannot generalize to larger vocabularies.
    The converted models are initialized from the pre-trained baseline and fine-tuned using the procedure described in Section~\ref{sec:experiments:finetuning}.
    Heatmap column denotes the mean accuracy over the full out-of-distribution heatmap test set.
  }
  \label{finetuning-table}
  \begin{small}
    \begin{tabular}{l|rrrl|rrrl}
    \toprule
    & \multicolumn{4}{c|}{\textbf{Propositional Logic}} & \multicolumn{4}{c}{\textbf{LTL}} \tabularnewline
    \textbf{Model} & \textbf{5 AP Val} & \textbf{10 AP Val} & \textbf{Heatmap} & \textbf{Fig.}  & \textbf{5 AP Val} & \textbf{10 AP Val} & \textbf{Heatmap} & \textbf{Fig.} \tabularnewline
    \midrule
    Pre-trained baseline               & 95.57\% & ---     & ---     &                           & 97.96\% & ---     & ---     &                          \tabularnewline
    Converted \& fine-tuned (1 epoch)  & 87.54\% & 67.17\% & 70.11\% & \ref{fig:prop-ft-1epoch}  & 94.75\% & 92.44\% & 85.91\% & \ref{fig:ltl-ft-1epoch}  \tabularnewline
    Converted \& fine-tuned (5 epochs) & 90.55\% & 68.78\% & 71.35\% & \ref{fig:prop-ft-5epochs} & 95.12\% & 92.02\% & 85.88\% & \ref{fig:ltl-ft-5epochs} \tabularnewline
    Trained from scratch               & 91.47\% & 69.97\% & 72.35\% & \ref{fig:prop-EP-DP-CP}   & 94.77\% & 91.34\% & 84.13\% & \ref{fig:ltl-EP-DP-CP}   \tabularnewline
    \bottomrule
    \end{tabular}
  \end{small}
\end{table*}

A natural question is whether the proposed method can be applied to an existing pre-trained model rather than training from scratch.
When all aggregated attention components are disabled, the architecture introduces no additional parameters compared to a standard transformer baseline, making a parameter-preserving conversion straightforward.
Specifically, given a pre-trained baseline model trained on a fixed interchangeable token vocabulary, we perform the following conversion:
(1) use the embedding of one interchangeable token (e.g., ``a'') as the actual embedding (Section~4.1);
(2) use the embedding of another interchangeable token (e.g., ``b'') as the placeholder embedding;
(3) map all remaining parameters one-to-one, then fine-tune.

\cref{finetuning-table} presents the results on both tasks.
Full heatmaps for the fine-tuned models are provided in Appendix~\ref{app:finetuning}.
On LTL, the converted and fine-tuned models achieve accuracy comparable to the model trained from scratch, and even outperform it on the out-of-distribution heatmap test set after only one fine-tuning epoch.
On propositional logic, fine-tuning similarly closes the gap with the from-scratch baseline within a small number of epochs.
This demonstrates that the method can be effectively initialized from an existing pre-trained model: the embedding conversion requires no architectural changes, and a small number of fine-tuning steps is sufficient to recover, and in some metrics even surpass, from-scratch performance.
The pre-trained baseline cannot process vocabularies larger than those seen during training, so its out-of-distribution columns are not applicable.
This suggests a practical path for extending pre-trained models with alpha-invariance guarantees without discarding prior training.

%% file: sections/limitations.tex
\section{Limitations}
\label{sec:limitations}

The computational cost of the proposed method scales as $O(SL^2)$ with the number of interchangeable token streams $S$, which imposes a practical upper bound on vocabulary size.
Although the experiments demonstrate tractability for $S \leq 10$ (Section~\ref{sec:experiments:efficiency}), domains such as program synthesis or theorem proving may involve hundreds of distinct local variables, where this cost becomes prohibitive.
We use the term \emph{open-vocabulary} to describe generalization to interchangeable token sets strictly larger than those seen during training, not an unbounded vocabulary in practical sense, since $S$ is constrained by available compute and memory.
Sparsification strategies could mitigate this cost, but they must be designed carefully to preserve the alpha-equivalence guarantee: approaches that drop streams based on token identity would break invariance, whereas selection based on input-symmetric criteria (such as positional frequency) could potentially preserve it.
Although the memory usage was well within the hardware capacity ($\leq 4$ GB, Section~\ref{sec:experiments:efficiency}) for our LTL setting ($S=10$, $L=50$, $d=64$), it could become a bottleneck for substantially larger $S$ or $d$. The same sparsification strategies that mitigate time complexity (e.g., Top-K stream gating) would apply equally here.

A second limitation concerns the generation of novel interchangeable tokens.
The proposed method can produce output tokens that do not appear in the input, as in a conventional language model, but any output symbol that must be treated as an interchangeable token requires a corresponding embedding stream.
Because streams are instantiated from the encoder input, the model cannot generate a fresh interchangeable token, i.e., one with no corresponding stream.
For the tasks studied here, namely satisfying-assignment prediction and LTL witness generation, all output interchangeable tokens are drawn from the input formula, so this constraint does not affect the reported benchmarks.
Tasks that require inventing new variable names, such as constructive proof generation or code synthesis, are therefore outside the current scope; a natural extension is to maintain a reserved pool of fresh-symbol streams for such settings.
The model conversion \& fine-tuning procedure introduced in Section~\ref{sec:experiments:finetuning} further suggests that it may be possible to address these limitations incrementally, opening pathways toward applications in coding and mathematical reasoning.

%% file: sections/conclusion.tex
\section{Conclusion}
\label{sec:conclusion}

This work demonstrates that symbol-invariant attention yields strong open-vocabulary generalization, delivering consistent gains over random embeddings and standard baselines on both propositional logic and LTL tasks.
The method is alpha-invariant by construction, achieving perfect alpha-covariance across perturbations and AP counts.
The experiments show that the inductive bias is especially beneficial when relational reasoning dominates, while remaining competitive when temporal reasoning is the primary bottleneck.
Using a simple procedure, our method can also be retrofitted into existing pre-trained models with minimal fine-tuning.

%% file: appendices/psuedocodes.tex
\section{Algorithm Details}
\label{sec:algorithms}

This appendix provides detailed pseudocode for all algorithms used in the interchangeable token architecture. The main paper describes the intuition and design of each component; readers interested in implementation details may refer to the pseudocode below.

\paragraph{Embedding with Interchangeable Tokens.}
The embedding algorithm processes the input token sequence to create $k$ parallel embeddings, one for each interchangeable token. For each stream $i$, the algorithm modifies the token sequence to replace the interchangeable token $i$ with an actual embedding index and all other interchangeable tokens with a placeholder embedding index. A binary mask is simultaneously created to track where each interchangeable token appears in the input, which will later be used for masking predictions. See \cref{alg:embed}.

\begin{algorithm}[p]
  \caption{Embedding with Interchangeable Tokens}
  \label{alg:embed}
  \begin{algorithmic}
    \STATE {\bfseries Input:} token indices $\mathbf{x} \in \mathbb{N}^{L}$, embedding matrix $\mathbf{W} \in \mathbb{R}^{(V_n + 2) \times d}$
    \STATE {\bfseries Output:} embeddings $\mathbf{E} \in \mathbb{R}^{k \times L \times d}$, mask $\mathbf{M} \in \{0,1\}^{k \times L}$
    \STATE {\bfseries Parameters:} $V_n$ = base vocabulary size, $k$ = number of interchangeable tokens
    \STATE
    \FOR{$i=0$ {\bfseries to} $k-1$}
    \STATE $\mathbf{x}' \gets \mathbf{x}$ \COMMENT{Copy input tokens}
    \FOR{each position $j$ in $\mathbf{x}'$}
    \IF{$\mathbf{x}'[j] \geq V_n$}
      \IF{$\mathbf{x}'[j] = V_n + i$}
        \STATE $\mathbf{x}'[j] \gets V_n$
      \ELSE
        \STATE $\mathbf{x}'[j] \gets V_n + 1$
      \ENDIF
    \ENDIF
    \ENDFOR
    \STATE $\mathbf{E}[i, :, :] \gets \text{Lookup}(\mathbf{x}', \mathbf{W})$ \COMMENT{Embed modified tokens}
    \STATE $\mathbf{M}[i, :] \gets (\mathbf{x} = V_n + i)$ \COMMENT{Create mask for $i$-th token}
    \ENDFOR
    \STATE Return $\mathbf{E}, \mathbf{M}$
  \end{algorithmic}
\end{algorithm}

\paragraph{Aggregating Interchangeable Token Streams.}
The aggregation algorithm fuses the $k$ parallel streams into a single aggregated stream while preserving the true representations at interchangeable token positions. It first computes a mean representation by averaging hidden states across all streams, then selectively restores the actual hidden state from stream $i$ at positions where interchangeable token $i$ appears. See \cref{alg:aggregate-aps}.

\begin{algorithm}[p]
  \caption{Aggregating Interchangeable Token Streams}
  \label{alg:aggregate-aps}
  \begin{algorithmic}
    \STATE {\bfseries Input:} hidden $\mathbf{H} \in \mathbb{R}^{k \times L \times d}$, token masks $\mathbf{A} \in \{0,1\}^{k \times L}$
    \STATE {\bfseries Output:} aggregated stream $\mathbf{G} \in \mathbb{R}^{L \times d}$
    \STATE \COMMENT{Mean over parallel streams}
    \STATE $\mathbf{g} \gets \sum_{i} \mathbf{H}[i, :, :]$
    \STATE $\mathbf{G} \gets \mathbf{g} / k$
    \STATE \COMMENT{Restore actual embeddings at their positions}
    \FOR{$i=0$ {\bfseries to} $k-1$}
    \STATE $\mathbf{P} \gets \mathbf{A}[i, :]$ \COMMENT{Positions of token $i$}
    \STATE $\mathbf{G} \gets \mathbf{G} \cdot (1-\mathbf{P}[:, None]) + \mathbf{H}[i, :, :] \cdot \mathbf{P}[:, None]$
    \ENDFOR
    \STATE Return $\mathbf{G}$
  \end{algorithmic}
\end{algorithm}

\paragraph{Transformer Encoder Layer.}
The encoder layer implements dual attention over the parallel streams: first, each stream applies self-attention independently to build context, then each stream attends to an aggregated view of all streams' representations. This dual mechanism enables both stream-specific processing and cross-stream information sharing. A standard feed-forward network is applied after each attention operation, with layer normalization and residual connections throughout. See \cref{alg:encoder-layer}.

\begin{algorithm}[p]
  \caption{Encoder Layer with Parallel Interchangeable Streams}
  \label{alg:encoder-layer}
  \begin{algorithmic}
    \STATE {\bfseries Input:} hidden $\mathbf{H} \in \mathbb{R}^{k \times L \times d}$, self-attention mask $\mathbf{m}$, token masks $\mathbf{A}$
    \STATE {\bfseries Output:} updated hidden $\mathbf{H}^{out}$, self-attention weights
    \STATE \COMMENT{Per-stream self-attention}
    \STATE $\mathbf{S}, w_{self} \gets \text{MHA}(\mathbf{H}, \mathbf{H}, \mathbf{H}, \mathbf{m})$
    \STATE $\mathbf{H}_1 \gets \text{Norm}(\mathbf{H} + \text{Dropout}(\mathbf{S}))$
    \STATE \COMMENT{Attend to aggregated interchangeable tokens}
    \STATE $\mathbf{G} \gets \text{AggregateAPS}(\mathbf{H}_1, \mathbf{A})$
    \STATE $\mathbf{P}, w_{ap} \gets \text{MHA}(\mathbf{H}_1, \mathbf{G}, \mathbf{G}, \mathbf{m})$
    \STATE $\mathbf{H}_2 \gets \text{Norm}(\mathbf{H}_1 + \text{Dropout}(\mathbf{P}))$
    \STATE \COMMENT{Position-wise feed-forward}
    \STATE $\mathbf{F} \gets \text{FFN}(\mathbf{H}_2)$
    \STATE $\mathbf{H}^{out} \gets \text{Norm}(\mathbf{H}_2 + \text{Dropout}(\mathbf{F}))$
    \STATE Return $\mathbf{H}^{out}$, $w_{self}$
  \end{algorithmic}
\end{algorithm}

\paragraph{Transformer Decoder Layer.}
The decoder layer mirrors the encoder but with causal masking for auto-regressive generation. Following per-stream self-attention under a look-ahead mask, the decoder applies aggregated-view attention as in the encoder. Cross-attention then allows each decoder stream to condition on encoder outputs, with configurable modes: per-stream cross-attention where each decoder stream attends to its corresponding encoder stream, or aggregated cross-attention where each decoder stream attends to an aggregated encoder representation. Multiple cross-attention layers can be stacked with different modes. See \cref{alg:decoder-layer}.

\begin{algorithm}[p]
  \caption{Decoder Layer with Parallel Interchangeable Streams}
  \label{alg:decoder-layer}
  \begin{algorithmic}
    \STATE {\bfseries Input:} hidden $\mathbf{H} \in \mathbb{R}^{k \times L \times d}$, look-ahead mask $\mathbf{m}_{la}$, token masks $\mathbf{A}$, encoder hidden $\mathbf{E} \in \mathbb{R}^{k \times L_e \times d}$, encoder token masks $\mathbf{A}^e$, padding mask $\mathbf{m}_{pad}$, cross-attention modes list $\text{modes}$ (entries \texttt{per} or \texttt{agg})
    \STATE {\bfseries Output:} updated hidden $\mathbf{H}^{out}$, self-attention weights
    \STATE \COMMENT{Masked per-stream self-attention}
    \STATE $\mathbf{S}, w_{self} \gets \text{MHA}(\mathbf{H}, \mathbf{H}, \mathbf{H}, \mathbf{m}_{la})$
    \STATE $\mathbf{H}_1 \gets \text{Norm}(\mathbf{H} + \text{Dropout}(\mathbf{S}))$
    \STATE \COMMENT{Attend to aggregated interchangeable tokens}
    \STATE $\mathbf{G} \gets \text{AggregateAPS}(\mathbf{H}_1, \mathbf{A})$
    \STATE $\mathbf{P}, w_{ap} \gets \text{MHA}(\mathbf{H}_1, \mathbf{G}, \mathbf{G}, \mathbf{m}_{la})$
    \STATE $\mathbf{H}_2 \gets \text{Norm}(\mathbf{H}_1 + \text{Dropout}(\mathbf{P}))$
    \STATE \COMMENT{Cross-attention stack (optional)}
    \STATE $\mathbf{H}_{x} \gets \mathbf{H}_2$
    \FOR{mode in modes}
    \IF{mode == \texttt{agg}}
    \STATE $\mathbf{K} \gets \text{AggregateAPS}(\mathbf{E}, \mathbf{A}^e)$ \COMMENT{Aggregate encoder streams}
    \ELSE
    \STATE $\mathbf{K} \gets \mathbf{E}$ \COMMENT{Per-stream encoder keys/values}
    \ENDIF
    \STATE $\mathbf{C} \gets \text{MHA}(\mathbf{H}_{x}, \mathbf{K}, \mathbf{K}, \mathbf{m}_{pad})$
    \STATE $\mathbf{H}_{x} \gets \text{Norm}(\mathbf{H}_{x} + \text{Dropout}(\mathbf{C}))$
    \ENDFOR
    \STATE \COMMENT{Position-wise feed-forward}
    \STATE $\mathbf{F} \gets \text{FFN}(\mathbf{H}_{x})$
    \STATE $\mathbf{H}^{out} \gets \text{Norm}(\mathbf{H}_{x} + \text{Dropout}(\mathbf{F}))$
    \STATE Return $\mathbf{H}^{out}$, $w_{self}$
  \end{algorithmic}
\end{algorithm}

\paragraph{Projection.}
The projection algorithm converts the parallel hidden states back to vocabulary logits using an asymmetric aggregation strategy. Base token logits are computed by averaging predictions across all $k$ streams. Each interchangeable token's logit comes directly from its corresponding stream's projection. See \cref{alg:project}.

\begin{algorithm}[p]
  \caption{Projection with Interchangeable Tokens}
  \label{alg:project}
  \begin{algorithmic}
    \STATE {\bfseries Input:} hidden states $\mathbf{H} \in \mathbb{R}^{k \times L \times d}$, weight matrix $\mathbf{W} \in \mathbb{R}^{(V_n + 2) \times d}$
    \STATE {\bfseries Output:} logits $\mathbf{Y} \in \mathbb{R}^{L \times (V_n + k)}$
    \STATE {\bfseries Parameters:} $V_n$ = base vocabulary size, $k$ = number of interchangeable tokens
    \STATE
    \STATE \COMMENT{Project each parallel hidden state}
    \FOR{$i=0$ {\bfseries to} $k-1$}
    \STATE $\mathbf{Z}[i, :, :] \gets \mathbf{H}[i, :, :] \mathbf{W}^T$ \COMMENT{Linear projection}
    \ENDFOR
    \STATE
    \STATE \COMMENT{Combine logits}
    \STATE Initialize $\mathbf{Y} \in \mathbb{R}^{L \times (V_n + k)}$
    \STATE
    \STATE \COMMENT{Base tokens: average across parallel representations}
    \FOR{$t=0$ {\bfseries to} $V_n - 1$}
    \STATE $\mathbf{Y}[:, t] \gets \frac{1}{k} \sum_{i=0}^{k-1} \mathbf{Z}[i, :, t]$
    \ENDFOR
    \STATE
    \STATE \COMMENT{Interchangeable tokens: use corresponding representation}
    \FOR{$i=0$ {\bfseries to} $k-1$}
    \STATE $\mathbf{Y}[:, V_n + i] \gets \mathbf{Z}[i, :, V_n]$ \COMMENT{Logit from actual embedding}
    \ENDFOR
    \STATE
    \STATE Return $\mathbf{Y}$
  \end{algorithmic}
\end{algorithm}

\paragraph{Remark on batching.} Reintroducing a batch dimension only requires reinstating a per-sequence mask that disables unused interchangeable-token pipelines. All other steps lift naturally over the batch axis.

\paragraph{Edge case: no interchangeable tokens.} When the input contains no interchangeable tokens (i.e., $k=0$), the architecture creates a single stream ($k=1$) that does not correspond to any specific interchangeable token. In this mode, the aggregated attention mechanisms operate identically to per-stream attention since there is only one stream to aggregate. Creating more than one stream in this case would be redundant: all operations are either per-stream (producing identical results across streams) or averaged across streams (where all streams would yield the same values due to the absence of interchangeable tokens). A consequence of this design is that if the input contains no interchangeable tokens, the model cannot generate interchangeable tokens in its output. However, this limitation does not affect the considered applications.

%% file: appendices/proof.tex
\section{Proof of Theorem~\ref{thm:alpha-invariance}}
\label{app:proof-alpha-invariance}

We prove that the proposed architecture is completely invariant to alpha-renaming. The key insight is that alpha-renaming merely permutes the ordering of the parallel streams, and all operations in the architecture either process streams independently or aggregate streams in a way that is invariant to permutation.

\begin{proof}
Let $f$ be an alpha-renaming function that fixes $\sV_n$ and permutes $\sV_i$. We show that for any input sequence $\vx$, applying the model to the alpha-renamed input $\vx' = f(\vx)$ yields an output $\hat{\vy}' = M(\vx')$ such that $f^{-1}(\hat{\vy}') = \hat{\vy}$, where $\hat{\vy} = M(\vx)$.

\medskip
\noindent
\textbf{Step 1: Stream Permutation from Alpha-Renaming.}
Consider alpha-renaming $f$ that maps interchangeable token $i$ to position $f(i)$ in the vocabulary $\sV_i'$. When we embed the renamed input $\vx'$ using \cref{alg:embed}, the embedding algorithm creates $k$ streams where stream $j$ corresponds to interchangeable token $j$ in the renamed vocabulary. The key observation is that stream $j$ in the renamed input corresponds to stream $f^{-1}(j)$ in the original input. Thus, applying alpha-renaming causes the streams to be reordered via a permutation $\pi = f|_{\sV_i}$ restricted to interchangeable tokens.

\medskip
\noindent
\textbf{Step 2: Per-Stream Operations Are Invariant to Reordering.}
In both encoder and decoder layers (\cref{alg:encoder-layer} and \cref{alg:decoder-layer}), each stream undergoes identical per-stream operations: self-attention, layer normalization, and feed-forward networks. All these operations process each stream independently using shared parameters (identical weights across all $k$ streams). Reordering the streams does not change the computation performed on each stream; stream $i$ receives the same input and applies the same transformations regardless of the ordering of streams in the batch.

Formally, if $\mathbf{H} \in \mathbb{R}^{k \times L \times d}$ is the input hidden state matrix and $\text{PerStreamOps}$ denotes any per-stream operation (e.g., self-attention or feed-forward), then
\begin{equation}
\text{PerStreamOps}(\pi(\mathbf{H})) = \pi(\text{PerStreamOps}(\mathbf{H}))
\end{equation}
where $\pi(\mathbf{H})$ denotes permuting the stream dimension by permutation $\pi$.

\medskip
\noindent
\textbf{Step 3: Aggregated Operations Are Invariant Due to Commutativity.}
The aggregation step in \cref{alg:aggregate-aps} is the key operation enabling inter-stream communication. It first averages hidden states across all $k$ streams and then restores actual embeddings at their designated positions:
\begin{align}
\mathbf{G} &= \frac{1}{k} \sum_{i=0}^{k-1} \mathbf{H}[i, :, :] \\
\mathbf{G} &\gets \mathbf{G} \cdot (1-\mathbf{A}[i, :][:, \text{None}]) + \mathbf{H}[i, :, :] \cdot \mathbf{A}[i, :][:, \text{None}]
\end{align}
The sum operation in the averaging step is commutative: permuting the summands does not change the result. Additionally, the restoration loop iterates through interchangeable token positions marked by mask $\mathbf{A}[i, :]$, which specifies \emph{where} token $i$ appears. Under alpha-renaming, the mask structure is permuted along with the streams: if token $i$ originally appeared at certain positions, then in the renamed input, token $f(i)$ appears at those same positions. The restoration loop correctly restores stream $f(i)$ at positions marked by $\mathbf{A}[f(i), :]$. Thus, aggregation produces the same output regardless of stream ordering:
\begin{equation}
\text{Aggregate}(\pi(\mathbf{H}), \pi(\mathbf{A})) = \text{Aggregate}(\mathbf{H}, \mathbf{A})
\end{equation}

\medskip
\noindent
\textbf{Step 4: Projection Preserves Invariance.}
In \cref{alg:project}, base token logits are computed by averaging across all $k$ streams, which is commutative and unaffected by reordering. Interchangeable token $i$'s logit comes from stream $i$. Under alpha-renaming $f$, interchangeable token $f(i)$ appears at the same input positions as token $i$ originally, and logit for $f(i)$ is computed from stream $f(i)$, which corresponds to stream $i$ in the original computation. When $f^{-1}$ is applied to the output logits, interchangeable token $f(i)$ is mapped back to token $i$, correctly inverting the alpha-renaming.

\medskip
\noindent
\textbf{Conclusion.}
Since each component of the architecture is invariant to stream reordering, and the renaming of streams is the only effect of alpha-renaming on the computation, we conclude that $M(\vx') = f(M(\vx))$, equivalently $f^{-1}(M(\vx')) = M(\vx) = \hat{\vy}$. This completes the proof.
\end{proof}

%% file: appendices/logic.tex
\section{Background: Formal Logic}
\label{app:logic}

This section provides an overview of the two formal logic systems studied in this work: propositional logic and Linear Temporal Logic (LTL). We begin with propositional logic and then present LTL as a temporal extension.

\subsection{Propositional Logic}
\label{app:prop}

Propositional logic provides a formal framework for reasoning about Boolean statements without considering time or sequence. Given a finite set $P$ of atomic propositions, where each $p \in P$ represents a Boolean variable, the syntax of propositional formulae is defined as shown in \Eqref{eq:prop}. Here, $\mathbf{T}$ denotes the constant \textit{True}, and the operators include negation ($\neg$), conjunction ($\wedge$), disjunction ($\vee$), equivalence ($\leftrightarrow$), and exclusive or ($\oplus$).

\begin{equation}
  \label{eq:prop}
  \phi := \mathbf{T}
  \mid p
  \mid \neg \phi
  \mid \phi_1 \wedge \phi_2
  \mid \phi_1 \vee \phi_2
  \mid \phi_1 \leftrightarrow \phi_2
  \mid \phi_1 \oplus \phi_2
\end{equation}

The satisfiability problem in propositional logic asks whether there exists a Boolean assignment to the atomic propositions in $P$ that makes a given formula $\phi$ evaluate to true. Our approach permits \textit{partial} assignments, where only a subset of propositions receives explicit values while others remain unspecified. For instance, setting $a = 1$ alone suffices to satisfy $a \vee b$, leaving $b$ unconstrained.

We encode such assignments as alternating sequences of proposition symbols and their corresponding Boolean values. As an illustration, the string \verb|a1b0| encodes the assignment where $a$ is set to true and $b$ is set to false. To validate model outputs and construct training datasets, we employ the \verb|pyaiger| library~\citep{pyaiger}.

\subsection{Linear Temporal Logic}
\label{app:ltl}

Linear Temporal Logic (LTL) builds upon propositional logic by adding operators that express properties over time~\citep{Pnueli77}. The syntax of LTL formulae, defined over atomic propositions $P$, is given in \Eqref{eq:ltl}. In addition to the constant $\mathbf{T}$, atomic propositions $p \in P$, negation $\neg$, and conjunction $\wedge$ from propositional logic, LTL introduces two temporal operators: $\X$ (\textit{next}) and $\until$ (\textit{until}).

\begin{equation}
  \label{eq:ltl}
  \phi := \mathbf{T} \mid p \mid \neg \phi \mid \phi_1 \wedge \phi_2 \mid \X \phi \mid \phi_1 \until \phi_2
\end{equation}

Note that propositional logic (\Eqref{eq:prop}) includes derived operators such as equivalence and exclusive or, which are not present in LTL (\Eqref{eq:ltl}).
This means that even though LTL builds upon propositional logic, the two systems pose different reasoning challenges, and one is not necessarily harder than the other.

The temporal operators have the following semantics:
\begin{itemize}
  \item $\X \phi$: The formula $\phi$ must be satisfied at the immediately following time step (i.e., at time $t+1$ when evaluated at time $t$).
  \item $\phi_1 \until \phi_2$: The formula $\phi_2$ must eventually hold at some future time step $t_2$, and $\phi_1$ must remain true at all intermediate time steps from the current time $t_1$ up to (but not necessarily including) $t_2$.
\end{itemize}

For example, $\X \X a$ specifies that proposition $a$ must be true at the third time step. The formula $\mathbf{T} \until a$ demands that $a$ becomes true at some point in the future. A more complex example is $\X b \wedge a \until c$, which requires $b$ to hold at the second step, $c$ to eventually hold, and $a$ to remain true at all time steps prior to when $c$ holds.

\paragraph{Symbolic Traces.}
A key distinction between LTL and propositional logic is that LTL formulae are evaluated over \textit{symbolic traces}—sequences that specify truth values of atomic propositions across time steps. Following the approach in DeepLTL~\citep{deepltl}, we consider symbolic traces of \textit{infinite} length represented in \textit{lasso} form, denoted $uv^\omega$. Here, $u$ is a finite prefix sequence, and $v$ is a finite sequence that repeats indefinitely to form the infinite suffix.

A symbolic trace captures \textit{all} concrete traces that satisfy the propositional constraints at each respective time step. For example, the symbolic trace $a, a \wedge \neg b, (c)^\omega$ characterizes any trace where $a$ is true at the first two time steps, $b$ is false at the second time step, and $c$ is true from the third time step onward. This trace satisfies both $\mathbf{T} \until c$ and $\X \neg b \wedge a \until c$, but violates $\X \X b$ since $b$ is not required to be true at the third time step.

Symbolic traces may be \textit{underspecified}, meaning certain propositions can assume arbitrary values at certain time steps. In the example above, the value of $a$ at time steps beyond the second is not constrained, and similarly for $b$ beyond the second step.

\paragraph{Problem Formulation and Notation.}
The LTL solving problem is to find a symbolic trace in lasso form $uv^\omega$ that satisfies a given formula $\phi$. We formulate this as an autoregressive generation task: conditioned on an LTL formula and a partially generated symbolic trace, the model predicts the distribution over possible next tokens.

For consistency with the DeepLTL dataset~\citep{deepltl}, we represent both formulae and traces using Polish (prefix) notation, where operators precede their operands. For instance, $a \wedge b$ is written as \verb|&ab|, eliminating the need for parentheses.

Traces in our notation include atomic propositions, the constants \texttt{True:1} and \texttt{False:0}, logical operators, and special delimiters: ``\verb|;|'' separates time steps, while ``\verb|{|'' and ``\verb|}|'' enclose the repeating suffix $v$. As an example, the string ``\verb|a;&ab;{b}|'' encodes the symbolic trace $a, a \wedge b, (b)^\omega$.

%% file: appendices/hyperparam.tex
\section{Hyperparameters}
\label{app:hyperparam}

Table~\ref{hyperparam-table} lists the hyperparameter choices for all experiments and models.
The hyperparameters for the baselines are taken from the prior work~\citep{deepltl,isk2025interchangeable}.
As seen in the table, we used smaller embedding sizes for our proposed method compared to the baselines based on preliminary hyperparameter tuning that showed competitive performance with reduced model capacity.
This highlights superior parameter efficiency of our method compared to the baselines.
We increased the feed-forward layer size of the proposed method on the propositional logic task, but we did not observe any improvement when we did the same for the baselines.
For example, 10 AP validation accuracy of the full vocabulary baseline on the propositional logic task was 87.87\% with 512 FC size and 87.36\% with 768 FC size.

\begin{table}[htbp]
  \centering
  \caption{Hyperparameter choices. Differing values between the baselines and our proposed method are highlighted in bold.}
  \label{hyperparam-table}
\begin{tabular}{ll | c c c  c | c c }
	\toprule
  Model & Experiment & Embedding & Layers & Heads & FC size & Batch Size & Train Steps \\
  \midrule
  \multirow{3}{*}{Baselines}
  & Copying              & 128       & 6      & 8     & 128     & 512        & 20K \\
  & LTL                  & 128       & 8      & 8     & 1024    & 768        & 52K \\
  & Propositional Logic  & 132       & 6      & 6     & 512    & 1024        & 50K \\
  \midrule
  \multirow{3}{*}{Proposed Method}
  & Copying              & 128       & 6      & 8     & 128     & 512        & 20K \\
  & LTL                  & \textbf{64} & 8      & \textbf{4}     & 1024    & 768        & 52K \\
  & Propositional Logic  & \textbf{96} & 6      & 6     & \textbf{768} & 1024        & 50K \\
	\bottomrule
\end{tabular}
\end{table}

%% file: appendices/copying.tex
\section{Copying with Scalable Vocabulary}
\label{app:experiments:copying}

We use the training dataset from the prior work~\citep{isk2025interchangeable} comprising 10 million random strings with lengths ranging from 20 to 80 characters and vocabulary sizes up to 20 unique characters.
The test dataset contains 20 samples for each feasible combination of sequence length and unique character count, arranged in an upper-triangular matrix structure (since unique character count cannot exceed sequence length).
We focus exclusively on this larger-scale variant for several reasons.
First, experiments on smaller copying tasks (with shorter sequences and smaller vocabularies) exhibited high variance and yielded inconclusive results in prior work.
Second, unlike the random embedding approach which required extensive hyperparameter exploration across embedding methods and normalization strategies, the proposed method introduces minimal hyperparameters, obviating the need for toy-scale parameter searches.
Third, demonstrating robust performance on a more challenging generalization scenario provides stronger evidence of the method's capabilities.

Unlike the random embedding baseline which requires evaluation across multiple random seeds to characterize variability, the proposed method is fully deterministic (apart from floating-point numerical precision) and produces identical predictions given the same input.
This eliminates sensitivity to embedding randomization entirely.

The evaluation framework employs edit distance to measure alignment between model predictions and ground truth sequences.
On this task, the proposed method achieves perfect performance with mean edit distance of 0.0, matching the performance of all baseline configurations.
In the next subsections, we focus on more complex and challenging logic-based tasks.

%% file: appendices/imbalance.tex
\section{Addressing Imbalance in LTL Dataset}
\label{app:ltl-imbalance}

For the LTL task, we identify an imbalance in the training set that affects the performance in 0 AP and 1 AP scenarios.
Specifically, the training dataset is underrepresented in these cases, which affects performance of the baselines.
We augment the training data to mitigate this issue and re-train all LTL models in Section~\ref{sec:experiments:generalization} for a fair comparison.

To ensure no overlap between the training and evaluation sets, we take the 0 AP and 1 AP samples from the full vocabulary baseline's training set and add them to the original training set.
This increases the 0 AP samples from 1 to 203 and the 1 AP samples from 11,635 to 22,474.
The distribution change is illustrated in Figure~\ref{fig:ltl-imbalance}.
The heatmap evaluation of the alpha-renaming baseline before and after augmentation is shown in Figure~\ref{fig:ltl-augment-alpha-renaming}.
As seen in the figure, the model fails on almost all 0 AP cases and struggles with 1 AP cases before augmentation.
But after addressing the imbalance, the performance matches the rest of the region within the AP count \& length boundary.

\begin{figure}[htbp]
  \centering
\includegraphics[width=0.65\textwidth]{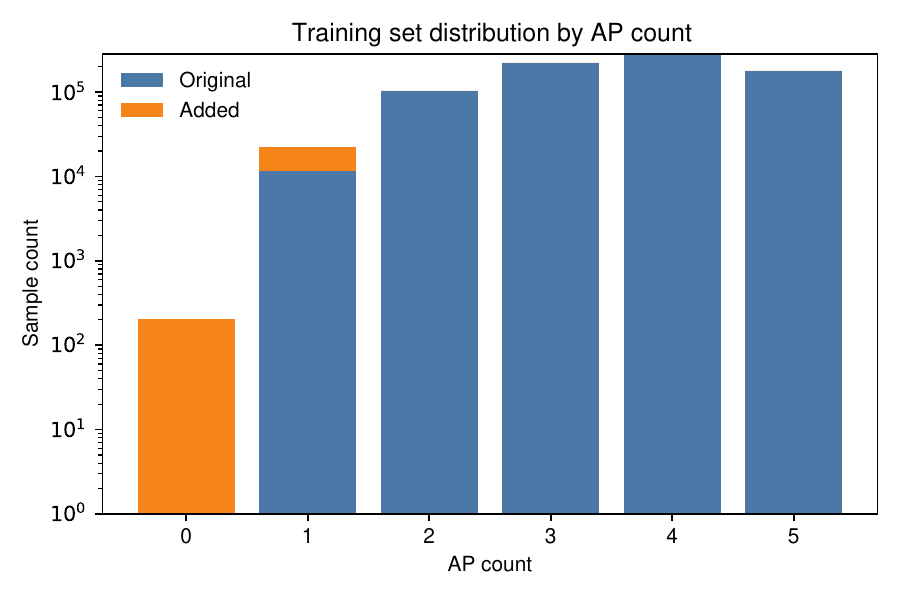}
  \caption{
  LTL training set distribution before and after augmentation in log scale.
  }
  \label{fig:ltl-imbalance}
\end{figure}

\begin{figure}[htbp]
  \centering
\includegraphics[width=\textwidth]{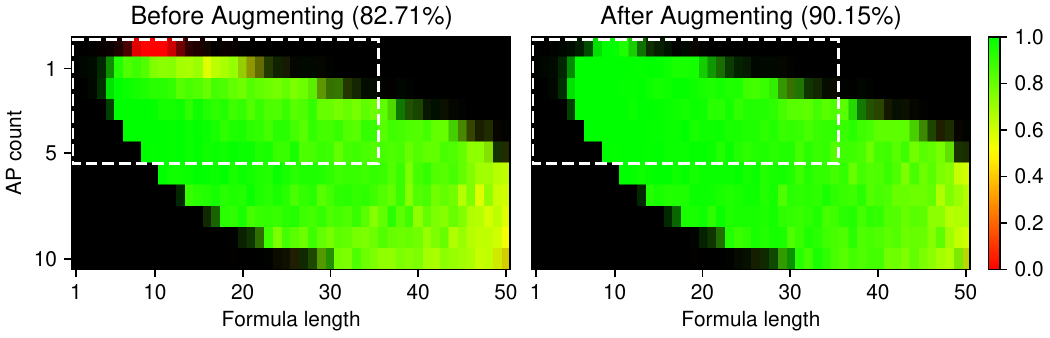}
  \caption{
  Heatmap evaluation of the alpha-renaming baseline before and after augmenting the training set to address imbalance.
  }
  \label{fig:ltl-augment-alpha-renaming}
\end{figure}

%% file: appendices/ablation.tex
\section{Detailed Ablation Studies}
\label{app:ablation}

In this appendix, we provide the complete set of ablation heatmaps for both LTL solving and propositional logic assignment prediction tasks.
\cref{fig:ltl-abl-heatmaps} shows the ablation heatmaps for LTL solving, while \cref{fig:prop-abl-heatmaps} presents the ablation heatmaps for propositional logic assignment prediction.
Each heatmap visualizes model accuracy across two dimensions: formula length (x-axis, 1--50) and argument pack count (y-axis). The left panel displays a table indicating which attention components are enabled for each configuration, including Encoder/Decoder/Cross-attention in both Per-Stream and Aggregated variants, along with model parameters and rank. The main heatmap uses a red-green colormap to show accuracy values (0 to 1), with black regions indicating areas lacking training data. The title reports overall accuracy across the evaluated configurations.

The abbreviations used in the configuration labels indicate which attention components are enabled:
\begin{itemize}
  \item \textbf{EP}: Encoder Per-Stream attention
  \item \textbf{EA}: Encoder Aggregated attention
  \item \textbf{DP}: Decoder Per-Stream attention
  \item \textbf{DA}: Decoder Aggregated attention
  \item \textbf{CP}: Cross Per-Stream attention
  \item \textbf{CA}: Cross Aggregated attention
\end{itemize}

\cref{full-ablation-table} summarizes the full ablation results, listing each configuration along with its corresponding validation accuracies on 5 and 10 AP datasets, heatmap accuracy, model parameter count, and figure reference.

\begin{table*}[htbp]
\caption{
  Full ablation results for LTL solving and propositional logic assignment prediction tasks.
  Columns show: (1) task type, (2) enabled attention components, (3-4) validation accuracy on 5 and 10 AP datasets, (5) heatmap accuracy, (6) model parameter count, and (7) corresponding figure reference.
  The models are sorted by task and heatmap accuracy in descending order.
  }
  \label{full-ablation-table}
  \begin{center}
    \begin{tabular}{l|cccccc|rrrr|c}
    \toprule
    \rotatebox{90}{\textbf{Task}} & \multicolumn{6}{c|}{\textbf{Components}} & \rot{\textbf{5 AP Val}} & \rot{\textbf{10 AP Val}} & \rot{\textbf{Heatmap}}& \textbf{Parameters} & \rotatebox{90}{\textbf{Figure}} \tabularnewline
    \midrule
    \multirow{19}{*}{\rotatebox[origin=c]{90}{Propositional Logic}}
    & EP & DP & EA & DA & CP &    & 97.94\% & 97.63\% & 95.05\% & 2,906,496 & \ref{fig:prop-EP-DP-EA-DA-CP} \\ %
    & EP & DP & EA & DA & CP & CA & 96.75\% & 96.05\% & 92.66\% & 3,131,136 & \ref{fig:prop-EP-DP-EA-DA-CP-CA} \\ %
    & EP & DP &    & DA & CP &    & 96.84\% & 96.19\% & 92.47\% & 2,681,856 & \ref{fig:prop-EP-DP-DA-CP} \\ %
    &    & DP & EA & DA & CP &    & 96.29\% & 95.16\% & 91.44\% & 2,681,856 & \ref{fig:prop-DP-EA-DA-CP} \\ %
    & EP & DP &    & DA & CP & CA & 95.69\% & 94.95\% & 91.10\% & 2,906,496 & \ref{fig:prop-EP-DP-DA-CP-CA} \\ %
    &    & DP & EA & DA & CP & CA & 94.81\% & 93.48\% & 89.15\% & 2,906,496 & \ref{fig:prop-DP-EA-DA-CP-CA} \\ %
    & EP & DP & EA &    & CP &    & 96.81\% & 85.77\% & 84.48\% & 2,681,856 & \ref{fig:prop-EP-DP-EA-CP} \\ %
    & EP & DP & EA &    & CP & CA & 96.70\% & 85.76\% & 84.33\% & 2,906,496 & \ref{fig:prop-EP-DP-EA-CP-CA} \\ %
    & EP & DP &    &    & CP & CA & 96.00\% & 83.40\% & 82.16\% & 2,681,856 & \ref{fig:prop-EP-DP-CP-CA} \\ %
    & EP & DP &    &    & CP &    & 91.47\% & 69.97\% & 72.35\% & 2,457,216 & \ref{fig:prop-EP-DP-CP} \\ %
    &    & DP & EA &    & CP &    & 88.96\% & 67.74\% & 70.33\% & 2,457,216 & \ref{fig:prop-DP-EA-CP} \\ %
    &    & DP & EA &    & CP & CA & 88.38\% & 66.32\% & 68.99\% & 2,681,856 & \ref{fig:prop-DP-EA-CP-CA} \\ %
    &    &    & EA & DA & CP &    & 64.52\% & 52.11\% & 52.66\% & 2,457,216 & \ref{fig:prop-EA-DA-CP} \\ %
    & EP &    & EA & DA & CP & CA & 59.21\% & 43.89\% & 47.82\% & 2,906,496 & \ref{fig:prop-EP-EA-DA-CP-CA} \\ %
    &    &    & EA & DA & CP & CA & 60.58\% & 47.60\% & 47.14\% & 2,681,856 & \ref{fig:prop-EA-DA-CP-CA} \\ %
    & EP &    & EA & DA & CP &    & 56.94\% & 44.09\% & 46.55\% & 2,681,856 & \ref{fig:prop-EP-EA-DA-CP} \\ %
    & EP &    &    & DA & CP &    & 54.50\% & 44.35\% & 44.50\% & 2,457,216 & \ref{fig:prop-EP-DA-CP} \\ %
    & EP & DP & EA & DA &    & CA & 39.95\% & 20.86\% & 28.51\% & 2,906,496 & \ref{fig:prop-EP-DP-EA-DA-CA} \\ %
    & EP &    &    & DA & CP & CA & 36.17\% & 26.15\% & 26.91\% & 2,681,856 & \ref{fig:prop-EP-DA-CP-CA} \\ %
    \midrule
    \multirow{19}{*}{\rotatebox[origin=c]{90}{LTL}}
    & EP & DP & EA &    & CP &    & 98.12\% & 96.53\% & 90.47\% & 2,654,144 & \ref{fig:ltl-EP-DP-EA-CP} \\ %
    & EP & DP & EA &    & CP & CA & 98.23\% & 96.49\% & 90.27\% & 2,788,288 & \ref{fig:ltl-EP-DP-EA-CP-CA} \\ %
    & EP & DP &    & DA & CP &    & 97.96\% & 96.32\% & 89.66\% & 2,654,144 & \ref{fig:ltl-EP-DP-DA-CP} \\ %
    & EP & DP & EA & DA & CP &    & 98.33\% & 96.45\% & 89.47\% & 2,788,288 & \ref{fig:ltl-EP-DP-EA-DA-CP} \\ %
    & EP & DP &    &    & CP & CA & 97.99\% & 96.05\% & 89.26\% & 2,654,144 & \ref{fig:ltl-EP-DP-CP-CA} \\ %
    & EP & DP & EA & DA & CP & CA & 97.96\% & 96.32\% & 89.16\% & 2,922,432 & \ref{fig:ltl-EP-DP-EA-DA-CP-CA} \\ %
    & EP & DP &    & DA & CP & CA & 98.15\% & 96.00\% & 88.95\% & 2,788,288 & \ref{fig:ltl-EP-DP-DA-CP-CA} \\ %
    &    & DP & EA & DA & CP &    & 97.54\% & 95.48\% & 87.53\% & 2,654,144 & \ref{fig:ltl-DP-EA-DA-CP} \\ %
    &    & DP & EA & DA & CP & CA & 97.68\% & 95.39\% & 87.40\% & 2,788,288 & \ref{fig:ltl-DP-EA-DA-CP-CA} \\ %
    & EP & DP &    &    & CP &    & 94.77\% & 91.34\% & 84.13\% & 2,520,000 & \ref{fig:ltl-EP-DP-CP} \\ %
    &    & DP & EA &    & CP &    & 94.90\% & 90.99\% & 84.13\% & 2,520,000 & \ref{fig:ltl-DP-EA-CP} \\ %
    &    & DP & EA &    & CP & CA & 94.78\% & 90.84\% & 82.76\% & 2,654,144 & \ref{fig:ltl-DP-EA-CP-CA} \\ %
    & EP &    & EA & DA & CP &    & 18.64\% & 22.00\% & 21.13\% & 2,654,144 & \ref{fig:ltl-EP-EA-DA-CP} \\ %
    & EP & DP & EA & DA &    & CA & 25.78\% & 20.52\% & 20.93\% & 2,788,288 & \ref{fig:ltl-EP-DP-EA-DA-CA} \\ %
    &    &    & EA & DA & CP &    & 17.44\% & 19.01\% & 20.27\% & 2,520,000 & \ref{fig:ltl-EA-DA-CP} \\ %
    & EP &    &    & DA & CP & CA & 18.81\% & 20.98\% & 19.60\% & 2,654,144 & \ref{fig:ltl-EP-DA-CP-CA} \\ %
    & EP &    &    & DA & CP &    & 15.45\% & 20.01\% & 17.58\% & 2,520,000 & \ref{fig:ltl-EP-DA-CP} \\ %
    & EP &    & EA & DA & CP & CA & 14.20\% & 16.53\% & 16.92\% & 2,788,288 & \ref{fig:ltl-EP-EA-DA-CP-CA} \\ %
    &    &    & EA & DA & CP & CA & 14.34\% & 15.02\% & 15.99\% & 2,654,144 & \ref{fig:ltl-EA-DA-CP-CA} \\ %
    \bottomrule
    \end{tabular}
  \end{center}
\end{table*}

\begin{figure}[p]
  \centering
\begin{subfigure}[b]{0.75\textwidth}\includegraphics[width=\textwidth]{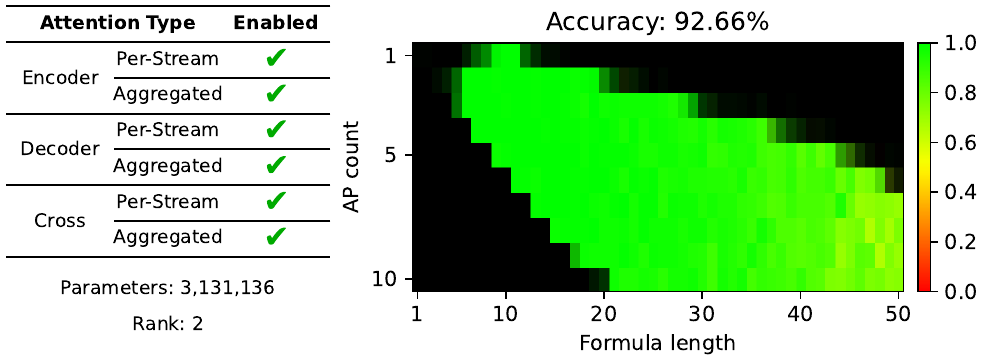}\caption{EP-DP-EA-DA-CP-CA}\label{fig:prop-EP-DP-EA-DA-CP-CA}\end{subfigure}
\begin{subfigure}[b]{0.75\textwidth}\includegraphics[width=\textwidth]{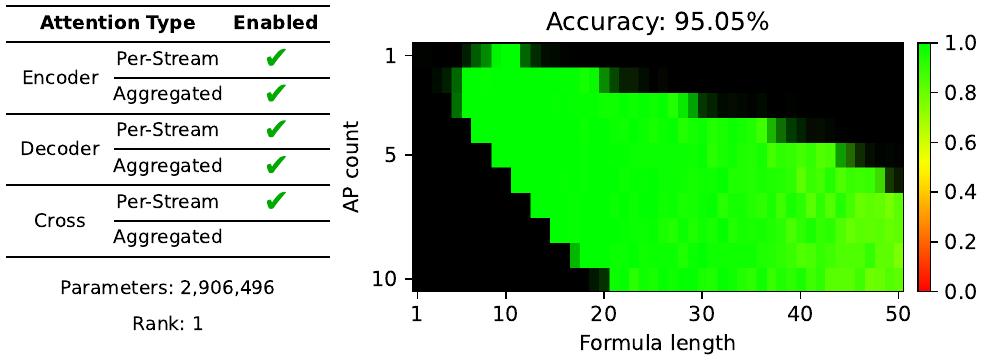}\caption{EP-DP-EA-DA-CP}\label{fig:prop-EP-DP-EA-DA-CP}\end{subfigure}
\begin{subfigure}[b]{0.75\textwidth}\includegraphics[width=\textwidth]{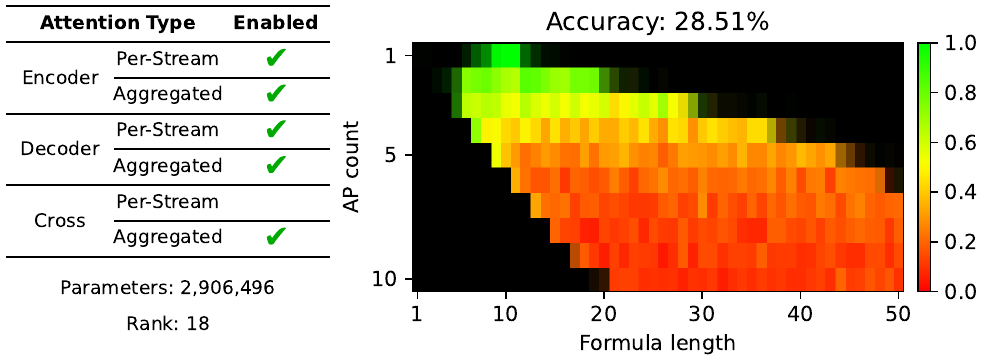}\caption{EP-DP-EA-DA-CA}\label{fig:prop-EP-DP-EA-DA-CA}\end{subfigure}
\begin{subfigure}[b]{0.75\textwidth}\includegraphics[width=\textwidth]{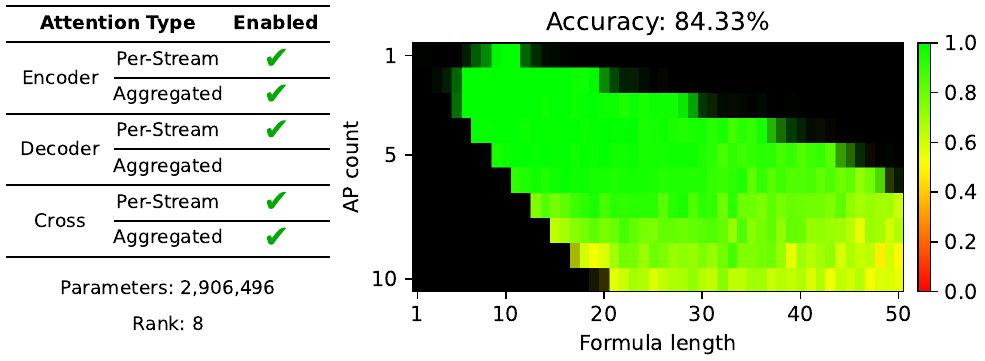}\caption{EP-DP-EA-CP-CA}\label{fig:prop-EP-DP-EA-CP-CA}\end{subfigure}
  \caption{
  Ablation heatmaps for propositional logic assignment prediction.
  }
  \label{fig:prop-abl-heatmaps}
\end{figure}
\begin{figure}[p]\ContinuedFloat
  \centering
\begin{subfigure}[b]{0.75\textwidth}\includegraphics[width=\textwidth]{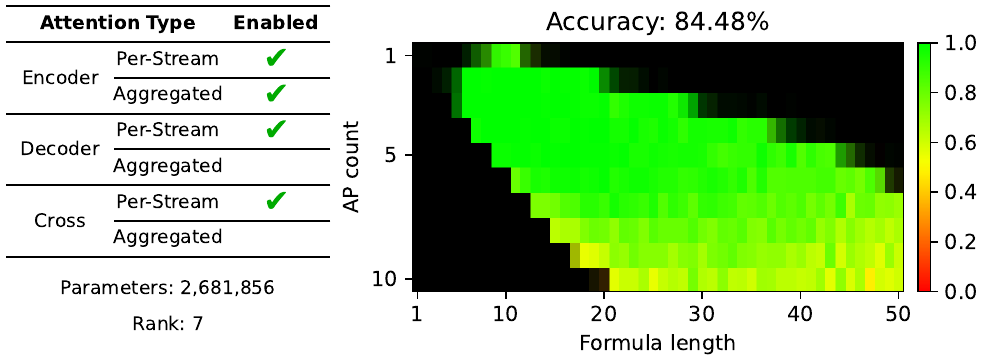}\caption{EP-DP-EA-CP}\label{fig:prop-EP-DP-EA-CP}\end{subfigure}
\begin{subfigure}[b]{0.75\textwidth}\includegraphics[width=\textwidth]{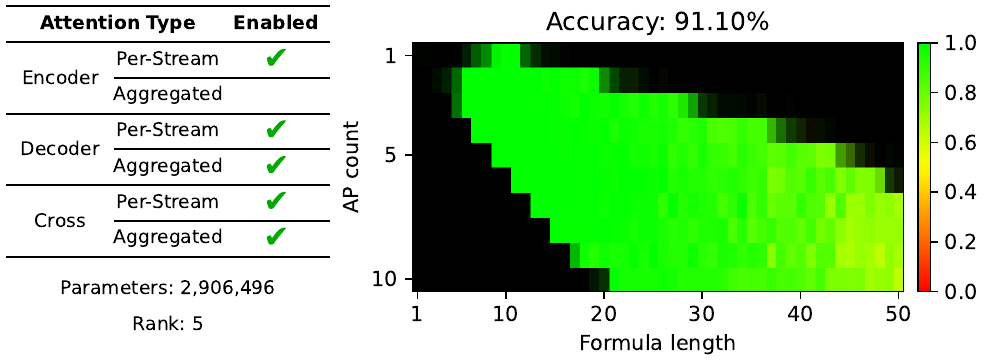}\caption{EP-DP-DA-CP-CA}\label{fig:prop-EP-DP-DA-CP-CA}\end{subfigure}
\begin{subfigure}[b]{0.75\textwidth}\includegraphics[width=\textwidth]{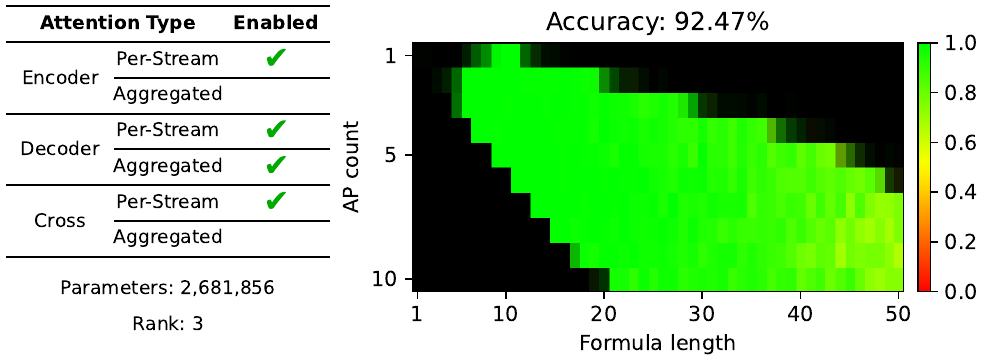}\caption{EP-DP-DA-CP}\label{fig:prop-EP-DP-DA-CP}\end{subfigure}
\begin{subfigure}[b]{0.75\textwidth}\includegraphics[width=\textwidth]{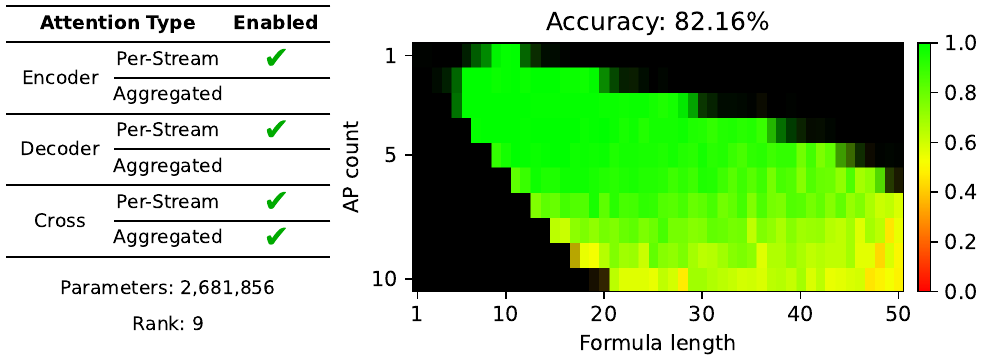}\caption{EP-DP-CP-CA}\label{fig:prop-EP-DP-CP-CA}\end{subfigure}
  \caption{
  Ablation heatmaps for propositional logic assignment prediction.
  }
\end{figure}
\begin{figure}[p]\ContinuedFloat
  \centering
\begin{subfigure}[b]{0.75\textwidth}\includegraphics[width=\textwidth]{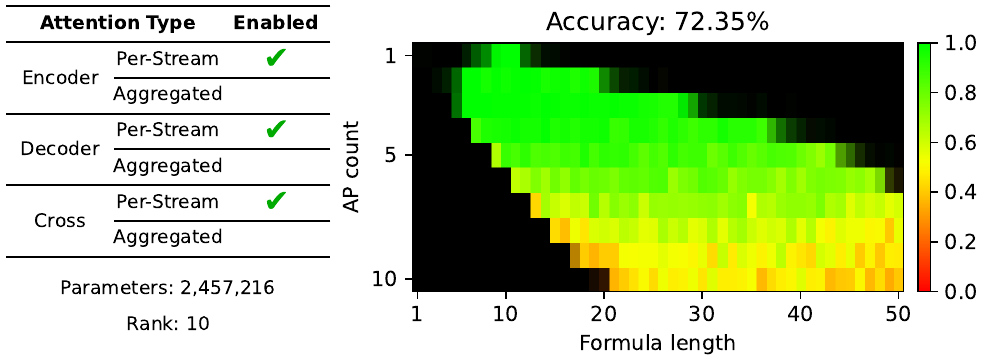}\caption{EP-DP-CP}\label{fig:prop-EP-DP-CP}\end{subfigure}
\begin{subfigure}[b]{0.75\textwidth}\includegraphics[width=\textwidth]{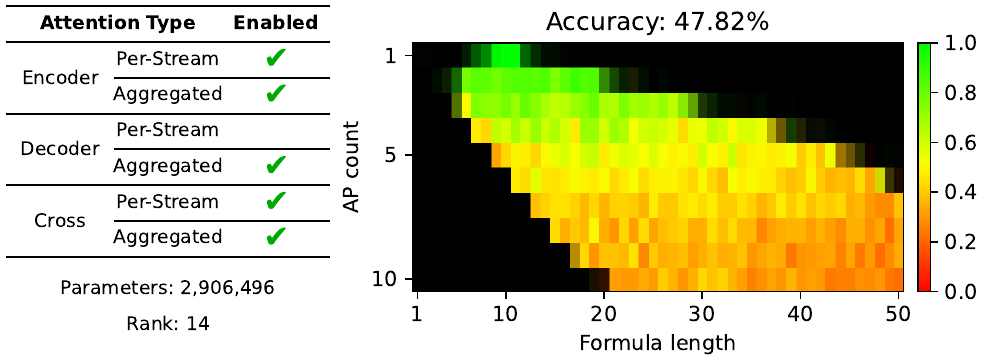}\caption{EP-EA-DA-CP-CA}\label{fig:prop-EP-EA-DA-CP-CA}\end{subfigure}
\begin{subfigure}[b]{0.75\textwidth}\includegraphics[width=\textwidth]{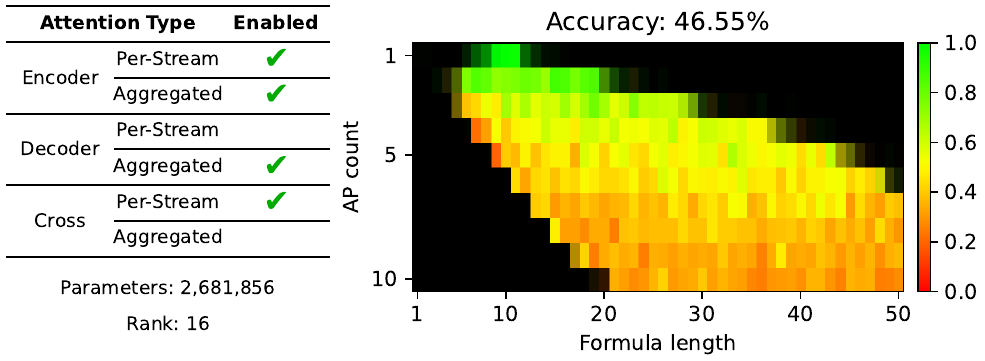}\caption{EP-EA-DA-CP}\label{fig:prop-EP-EA-DA-CP}\end{subfigure}
\begin{subfigure}[b]{0.75\textwidth}\includegraphics[width=\textwidth]{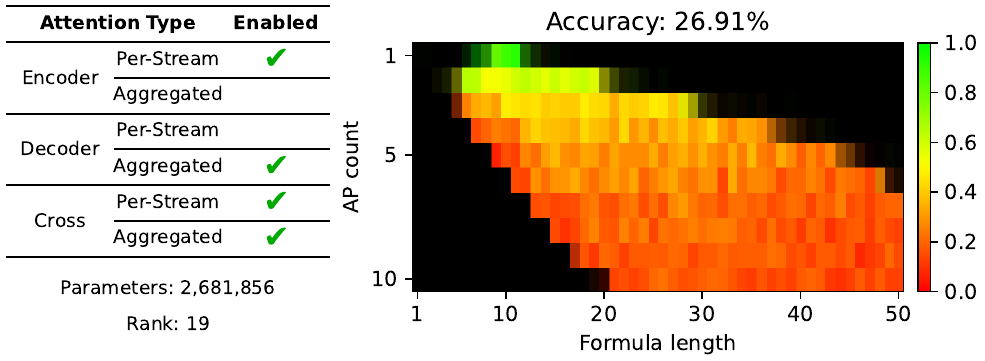}\caption{EP-DA-CP-CA}\label{fig:prop-EP-DA-CP-CA}\end{subfigure}
  \caption{
  Ablation heatmaps for propositional logic assignment prediction.
  }
\end{figure}
\begin{figure}[p]\ContinuedFloat
  \centering
\begin{subfigure}[b]{0.75\textwidth}\includegraphics[width=\textwidth]{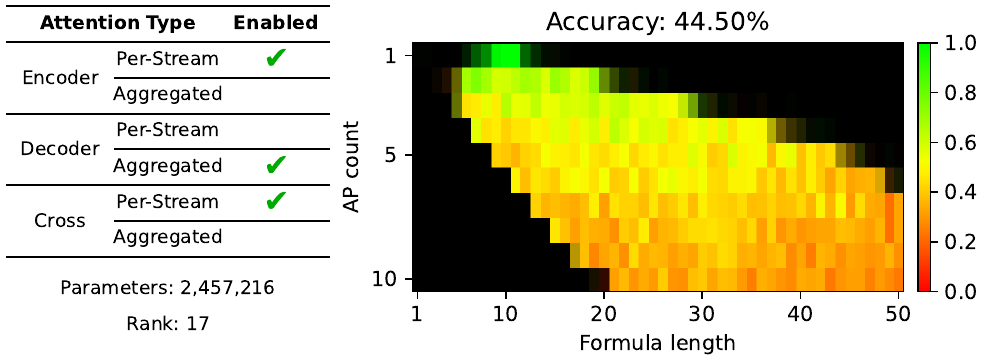}\caption{EP-DA-CP}\label{fig:prop-EP-DA-CP}\end{subfigure}
\begin{subfigure}[b]{0.75\textwidth}\includegraphics[width=\textwidth]{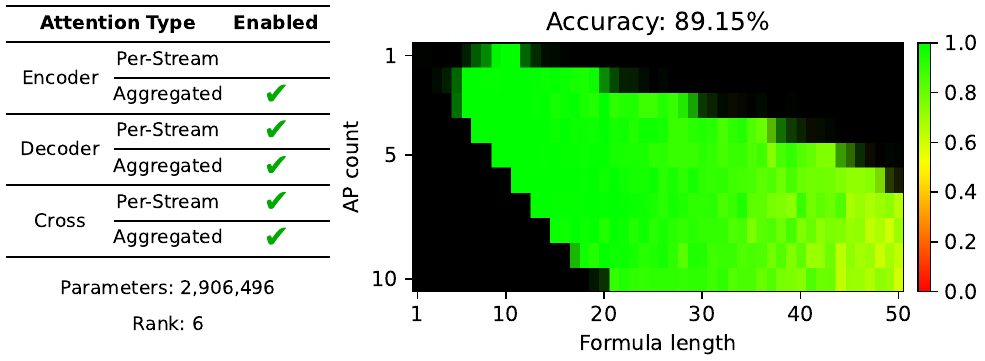}\caption{DP-EA-DA-CP-CA}\label{fig:prop-DP-EA-DA-CP-CA}\end{subfigure}
\begin{subfigure}[b]{0.75\textwidth}\includegraphics[width=\textwidth]{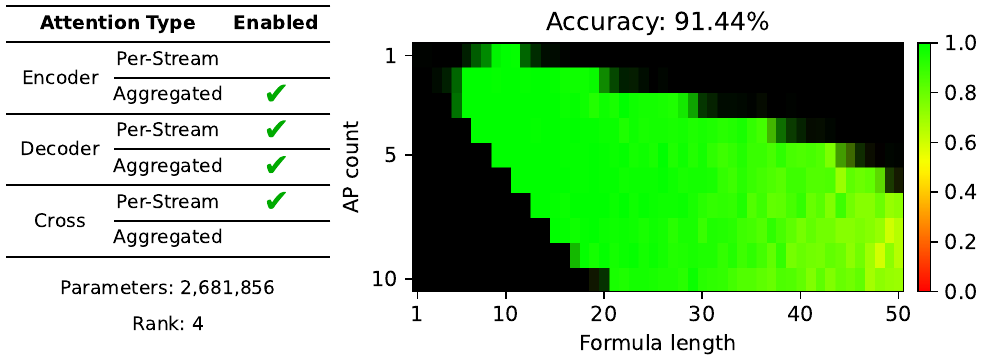}\caption{DP-EA-DA-CP}\label{fig:prop-DP-EA-DA-CP}\end{subfigure}
\begin{subfigure}[b]{0.75\textwidth}\includegraphics[width=\textwidth]{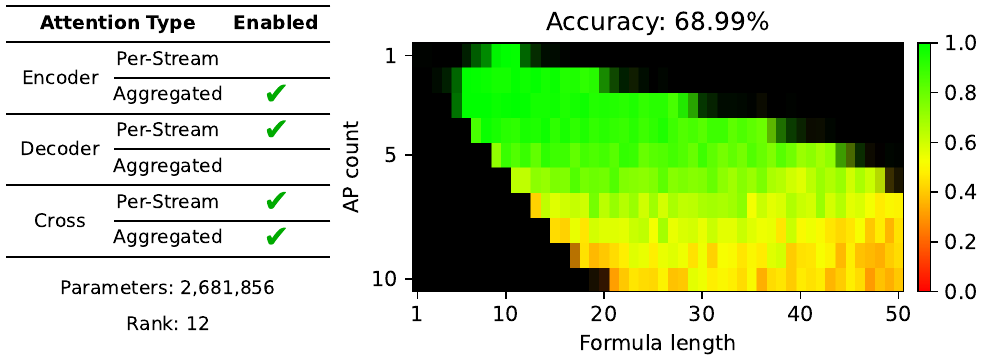}\caption{DP-EA-CP-CA}\label{fig:prop-DP-EA-CP-CA}\end{subfigure}
  \caption{
  Ablation heatmaps for propositional logic assignment prediction.
  }
\end{figure}
\begin{figure}[p]\ContinuedFloat
  \centering
\begin{subfigure}[b]{0.75\textwidth}\includegraphics[width=\textwidth]{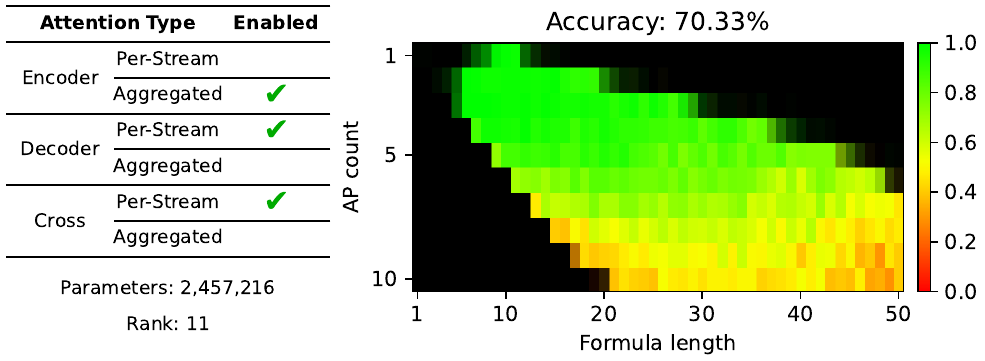}\caption{DP-EA-CP}\label{fig:prop-DP-EA-CP}\end{subfigure}
\begin{subfigure}[b]{0.75\textwidth}\includegraphics[width=\textwidth]{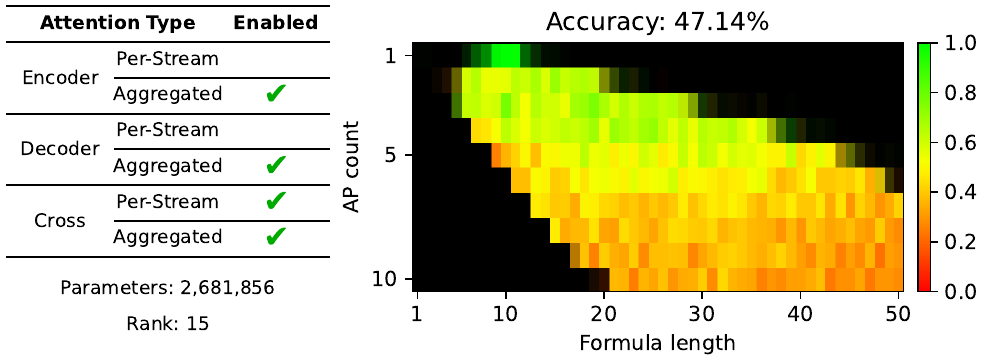}\caption{EA-DA-CP-CA}\label{fig:prop-EA-DA-CP-CA}\end{subfigure}
\begin{subfigure}[b]{0.75\textwidth}\includegraphics[width=\textwidth]{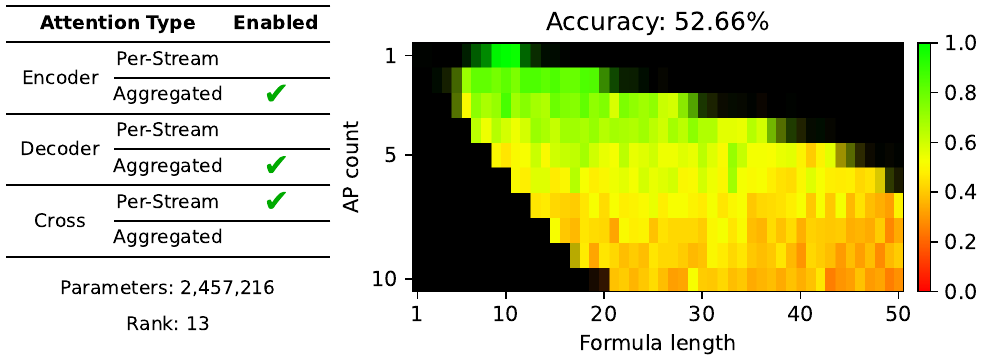}\caption{EA-DA-CP}\label{fig:prop-EA-DA-CP}\end{subfigure}
  \caption{
  Ablation heatmaps for propositional logic assignment prediction.
  }
\end{figure}

\begin{figure}[p]
  \centering
\begin{subfigure}[b]{0.75\textwidth}\includegraphics[width=\textwidth]{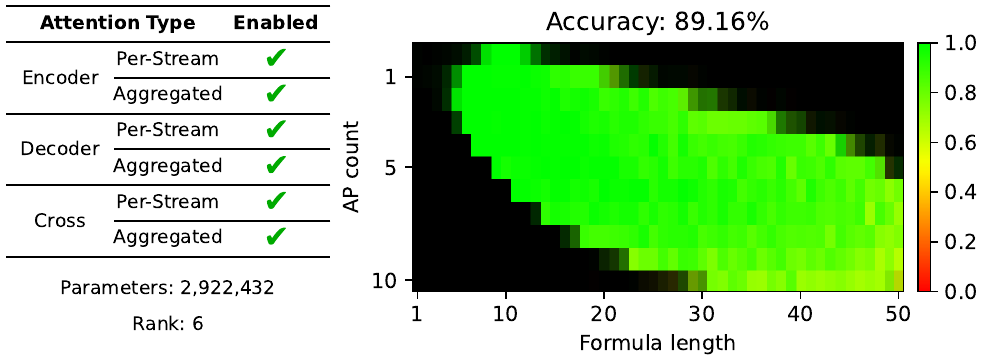}\caption{EP-DP-EA-DA-CP-CA}\label{fig:ltl-EP-DP-EA-DA-CP-CA}\end{subfigure}
\begin{subfigure}[b]{0.75\textwidth}\includegraphics[width=\textwidth]{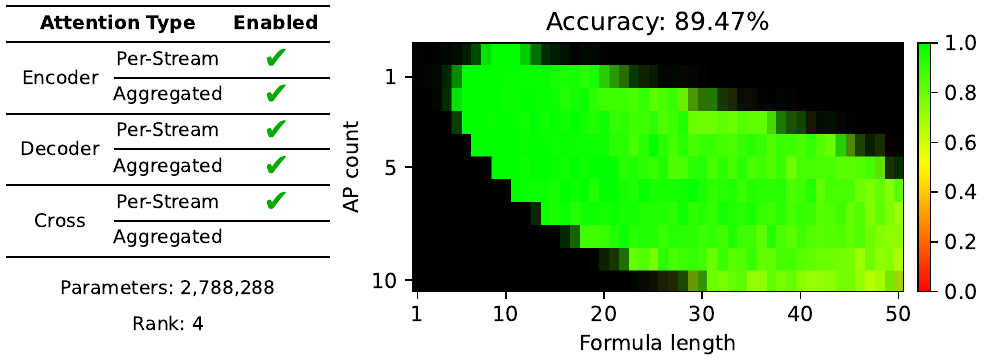}\caption{EP-DP-EA-DA-CP}\label{fig:ltl-EP-DP-EA-DA-CP}\end{subfigure}
\begin{subfigure}[b]{0.75\textwidth}\includegraphics[width=\textwidth]{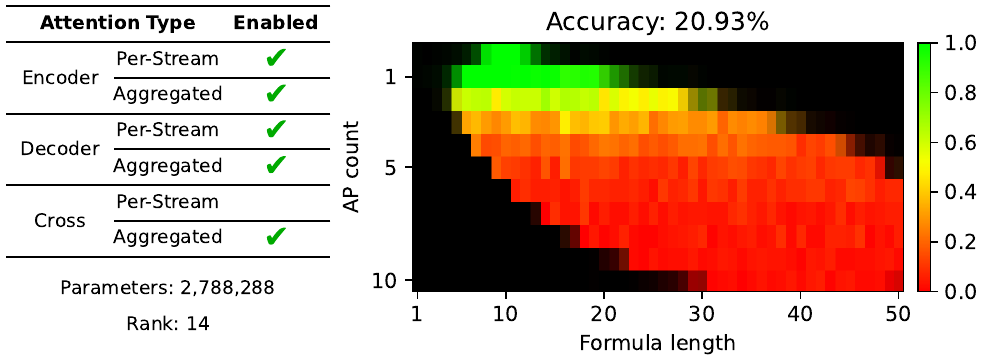}\caption{EP-DP-EA-DA-CA}\label{fig:ltl-EP-DP-EA-DA-CA}\end{subfigure}
\begin{subfigure}[b]{0.75\textwidth}\includegraphics[width=\textwidth]{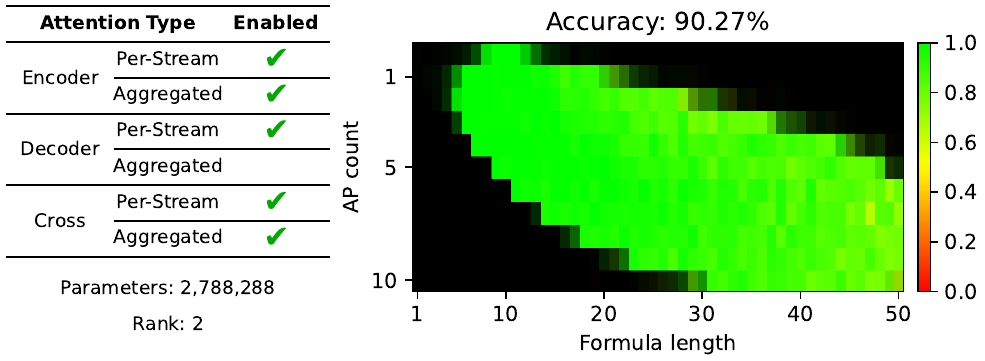}\caption{EP-DP-EA-CP-CA}\label{fig:ltl-EP-DP-EA-CP-CA}\end{subfigure}
  \caption{
  Ablation heatmaps for LTL solving.
  }
  \label{fig:ltl-abl-heatmaps}
\end{figure}
\begin{figure}[p]\ContinuedFloat
  \centering
\begin{subfigure}[b]{0.75\textwidth}\includegraphics[width=\textwidth]{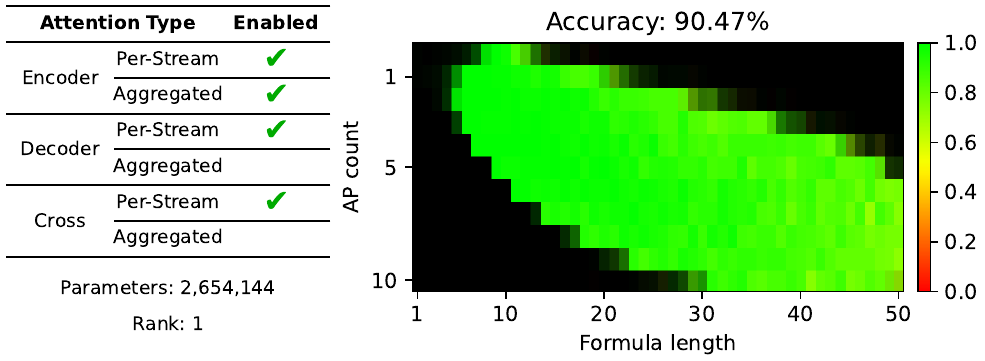}\caption{EP-DP-EA-CP}\label{fig:ltl-EP-DP-EA-CP}\end{subfigure}
\begin{subfigure}[b]{0.75\textwidth}\includegraphics[width=\textwidth]{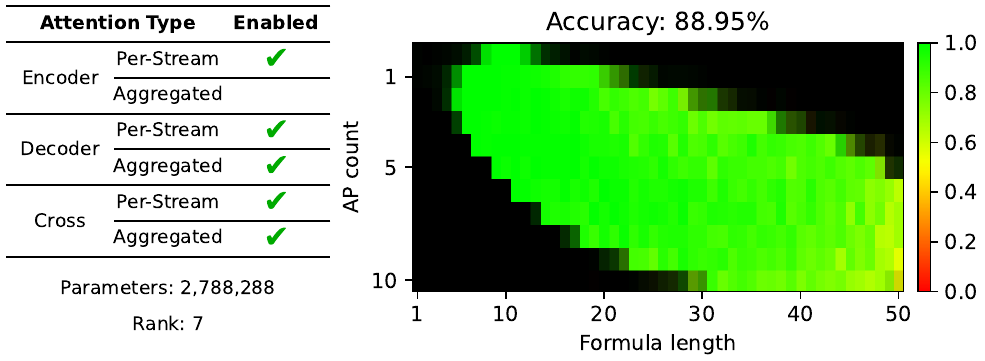}\caption{EP-DP-DA-CP-CA}\label{fig:ltl-EP-DP-DA-CP-CA}\end{subfigure}
\begin{subfigure}[b]{0.75\textwidth}\includegraphics[width=\textwidth]{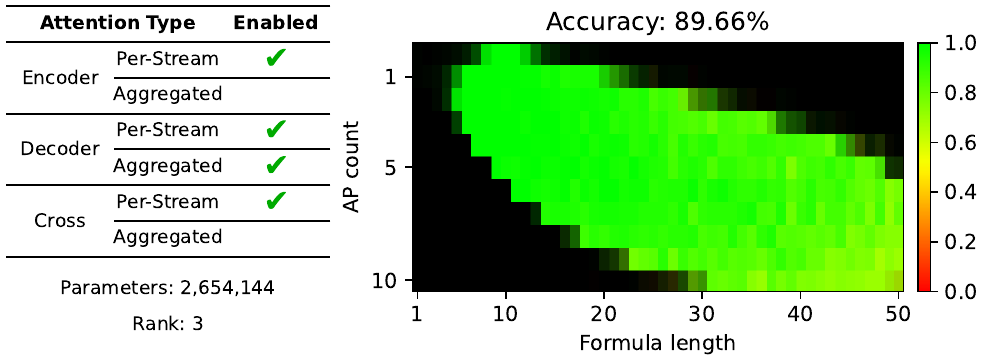}\caption{EP-DP-DA-CP}\label{fig:ltl-EP-DP-DA-CP}\end{subfigure}
\begin{subfigure}[b]{0.75\textwidth}\includegraphics[width=\textwidth]{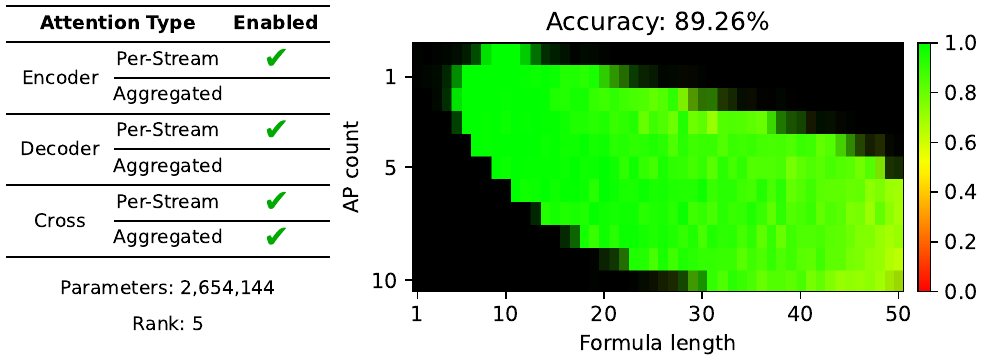}\caption{EP-DP-CP-CA}\label{fig:ltl-EP-DP-CP-CA}\end{subfigure}
  \caption{
  Ablation heatmaps for LTL solving.
  }
\end{figure}
\begin{figure}[p]\ContinuedFloat
  \centering
\begin{subfigure}[b]{0.75\textwidth}\includegraphics[width=\textwidth]{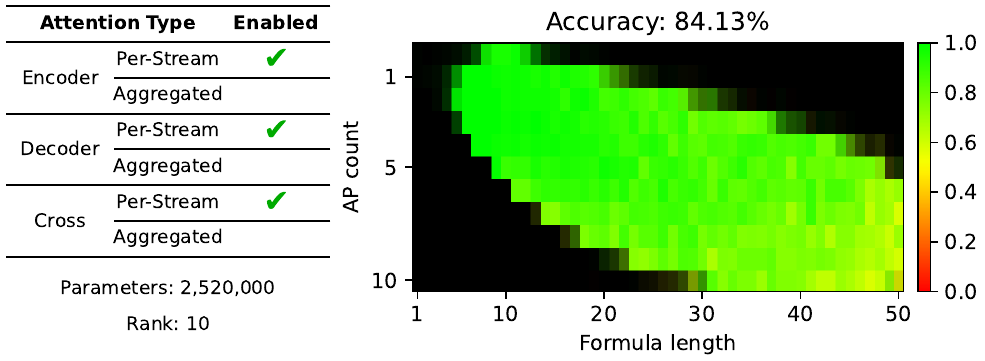}\caption{EP-DP-CP}\label{fig:ltl-EP-DP-CP}\end{subfigure}
\begin{subfigure}[b]{0.75\textwidth}\includegraphics[width=\textwidth]{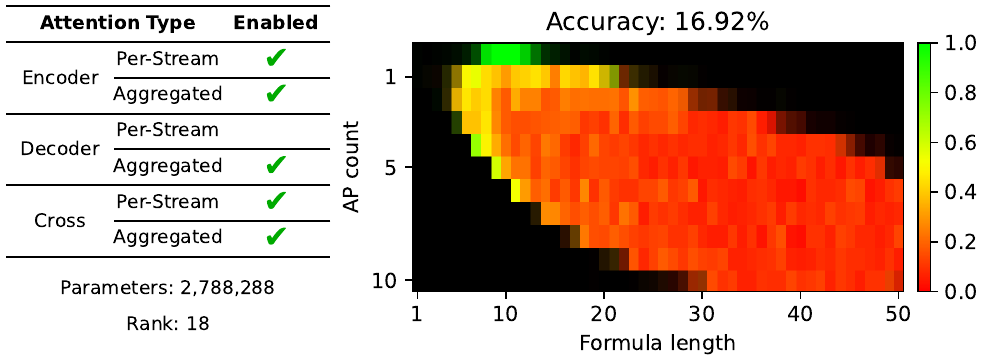}\caption{EP-EA-DA-CP-CA}\label{fig:ltl-EP-EA-DA-CP-CA}\end{subfigure}
\begin{subfigure}[b]{0.75\textwidth}\includegraphics[width=\textwidth]{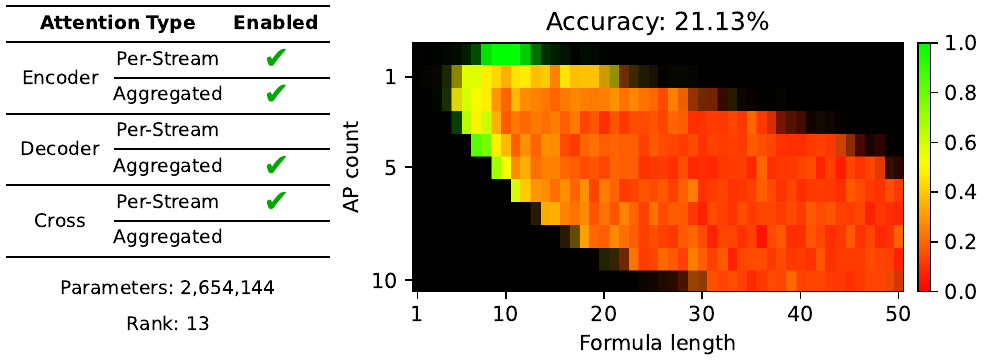}\caption{EP-EA-DA-CP}\label{fig:ltl-EP-EA-DA-CP}\end{subfigure}
\begin{subfigure}[b]{0.75\textwidth}\includegraphics[width=\textwidth]{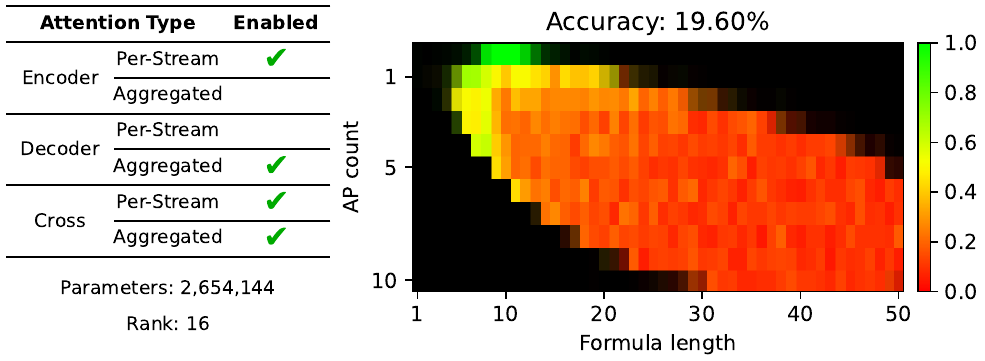}\caption{EP-DA-CP-CA}\label{fig:ltl-EP-DA-CP-CA}\end{subfigure}
  \caption{
  Ablation heatmaps for LTL solving.
  }
\end{figure}
\begin{figure}[p]\ContinuedFloat
  \centering
\begin{subfigure}[b]{0.75\textwidth}\includegraphics[width=\textwidth]{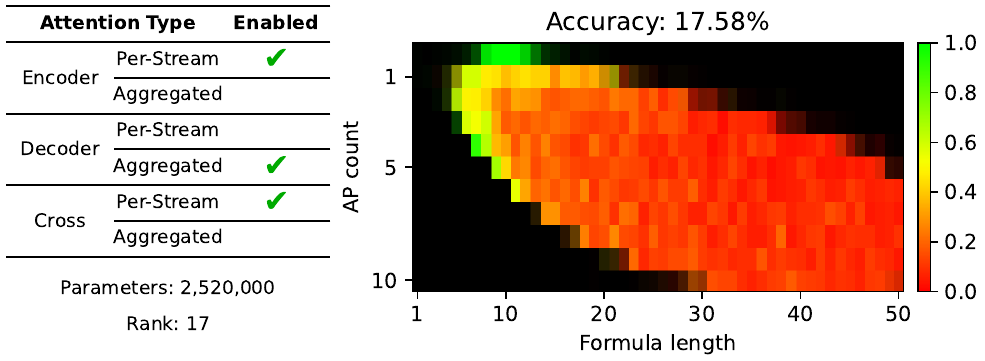}\caption{EP-DA-CP}\label{fig:ltl-EP-DA-CP}\end{subfigure}
\begin{subfigure}[b]{0.75\textwidth}\includegraphics[width=\textwidth]{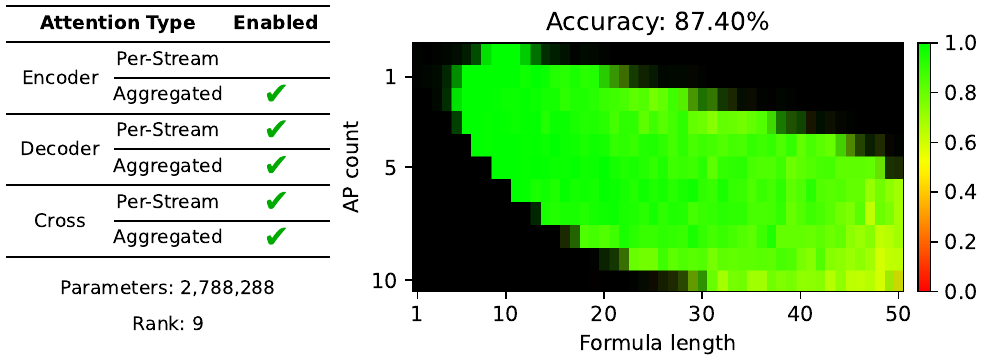}\caption{DP-EA-DA-CP-CA}\label{fig:ltl-DP-EA-DA-CP-CA}\end{subfigure}
\begin{subfigure}[b]{0.75\textwidth}\includegraphics[width=\textwidth]{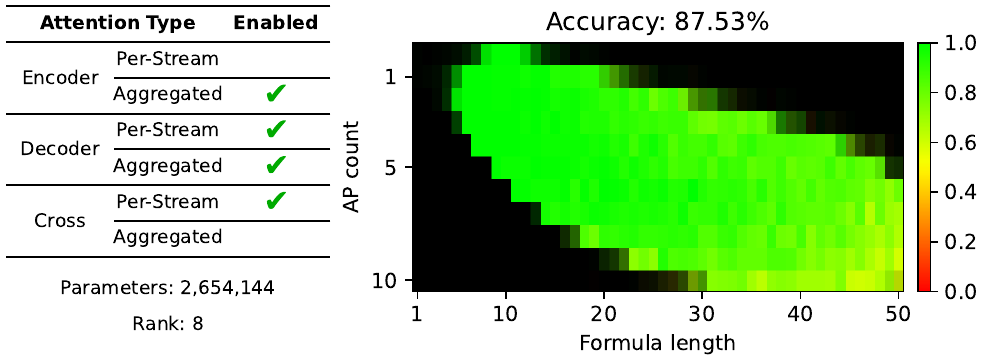}\caption{DP-EA-DA-CP}\label{fig:ltl-DP-EA-DA-CP}\end{subfigure}
\begin{subfigure}[b]{0.75\textwidth}\includegraphics[width=\textwidth]{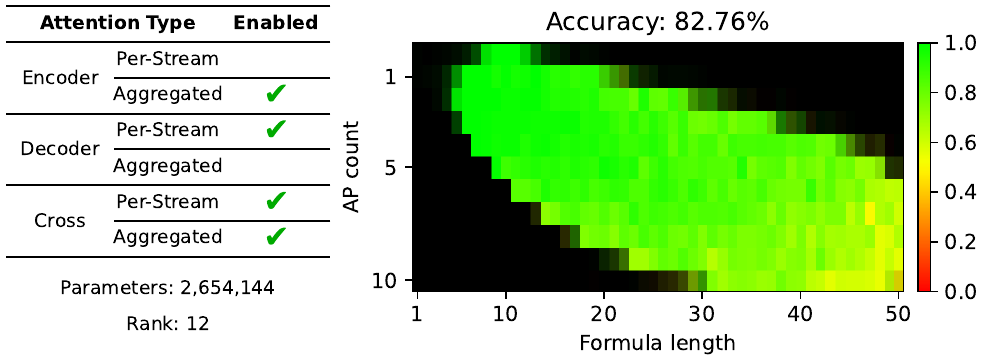}\caption{DP-EA-CP-CA}\label{fig:ltl-DP-EA-CP-CA}\end{subfigure}
  \caption{
  Ablation heatmaps for LTL solving.
  }
\end{figure}
\begin{figure}[p]\ContinuedFloat
  \centering
\begin{subfigure}[b]{0.75\textwidth}\includegraphics[width=\textwidth]{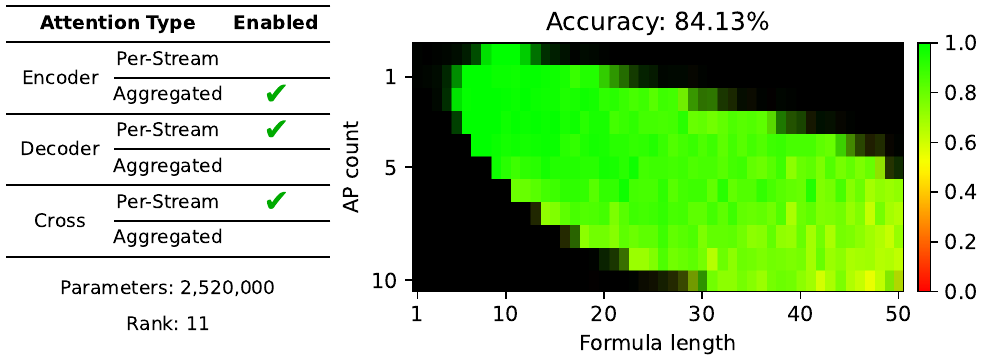}\caption{DP-EA-CP}\label{fig:ltl-DP-EA-CP}\end{subfigure}
\begin{subfigure}[b]{0.75\textwidth}\includegraphics[width=\textwidth]{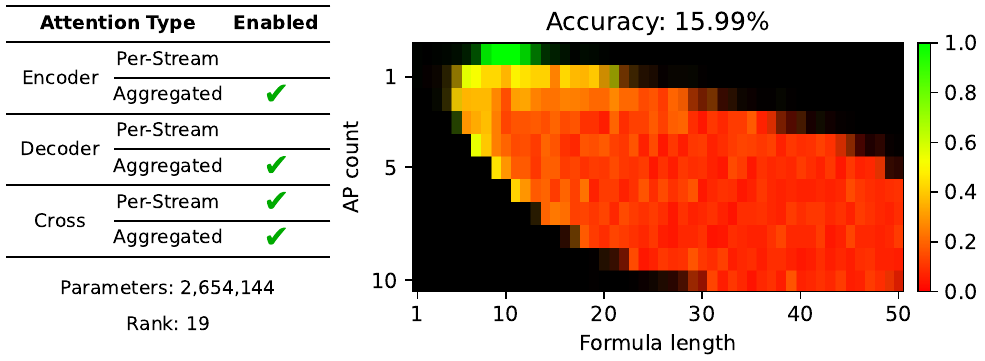}\caption{EA-DA-CP-CA}\label{fig:ltl-EA-DA-CP-CA}\end{subfigure}
\begin{subfigure}[b]{0.75\textwidth}\includegraphics[width=\textwidth]{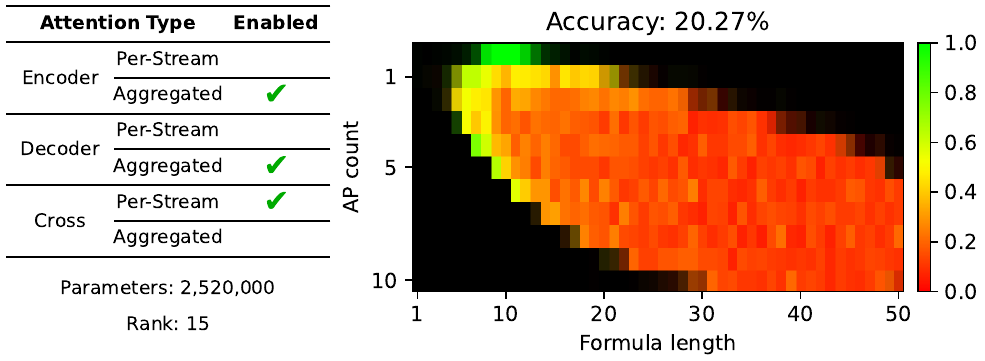}\caption{EA-DA-CP}\label{fig:ltl-EA-DA-CP}\end{subfigure}
  \caption{
  Ablation heatmaps for LTL solving.
  }
\end{figure}

%% file: appendices/llm.tex
\clearpage
\section{LLM Setup}
\label{app:llm}

For the experiments involving large language models (LLMs), we use GPT-5.2~\citep{gpt5card} via the OpenAI Responses API (\verb|v1/responses|).

While our specialized models process formulas in prefix (Polish) notation, we interact with the LLM using infix notation, which aligns better with natural language conventions and the expectations of general-purpose language models. For the propositional logic task, the LLM's responses are constrained to JSON format through schema validation. The specific prompts for both tasks are presented in Listings~\ref{lst:llm-prompt-ltl} and~\ref{lst:llm-prompt-prop}. During inference, the ``\verb|{formula}|'' placeholder within each prompt is dynamically substituted with the concrete formula instance before being sent to the LLM.

We set the reasoning effort to medium and text verbosity to low in all experiments while using the default values for other parameters, except for the alpha-covariance evaluation.
Since alpha-covariance does not evaluate the output correctness, we disable reasoning entirely to save costs, and we set the temperature to 0 for deterministic sampling.

With medium reasoning effort, GPT-5.2 takes 10 to 90 seconds to solve each sample in our benchmarks.
In particular, GPT-5.2 takes 4 hours to solve 387 samples from the LTL heatmap test set (sequentially, without using the Batch API), averaging around 37 seconds per sample.
We use the Batch API for the rest of our experiments, which doesn't report per-sample latency.

Due to financial constraints, we reduce the evaluation sample counts as follows:
\begin{itemize}
	\item \textbf{Heatmap evaluations:} We evaluate on 10 samples instead of 100 per cell (AP count \& length combination).
	\item \textbf{Test set in \cref{perturbation-table}:} Sample size is reduced from ~100k to 3k.
	\item \textbf{Alpha-covariance evaluations:} Sample size is reduced from 1k to 100.
\end{itemize}

The heatmaps for GPT-5.2 on the LTL and propositional logic tasks are shown in Figure~\ref{fig:gpt-heatmaps}.

\begin{figure}[htbp]
  \centering
\includegraphics[width=\textwidth]{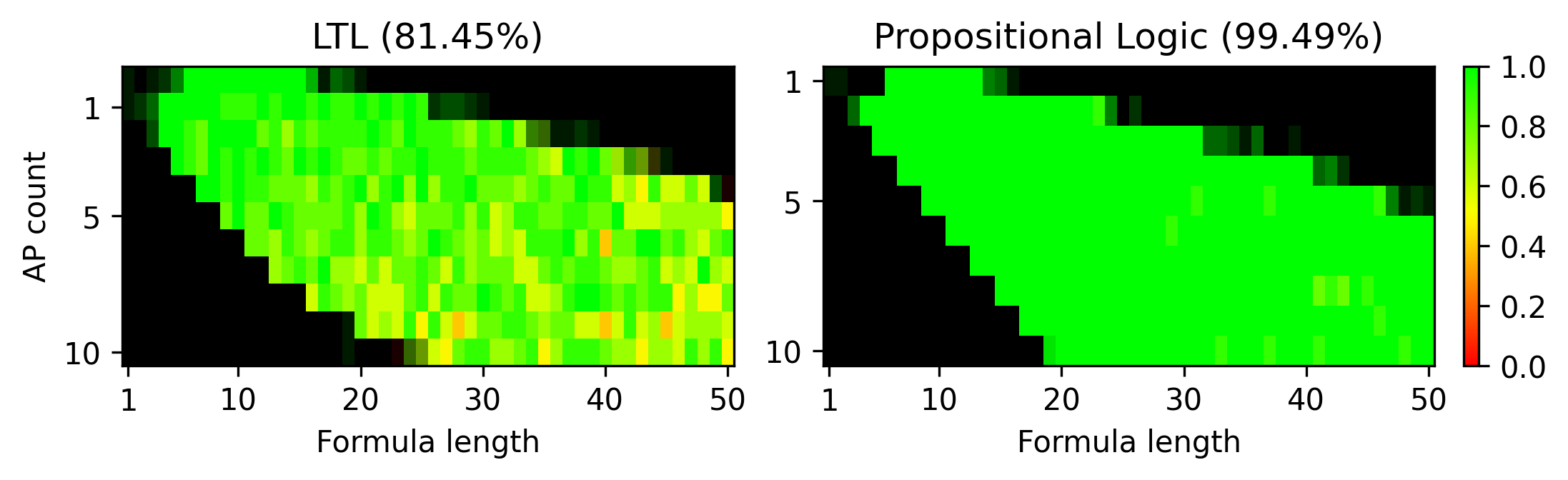}
  \caption{
Heatmaps for GPT-5.2 on the LTL (left) and propositional logic (right) tasks.
  }
  \label{fig:gpt-heatmaps}
\end{figure}

\begin{lstlisting}[caption={LLM Prompt for the LTL solving task.},captionpos=t,label={lst:llm-prompt-ltl}]
Your task is to generate a satisfying trace for a given LTL (Linear Temporal Logic) formula.
Lowercase letters denote the atomic propositions.
The output trace should be in lasso form composed of two parts: the prefix part and the cycle part.
Timesteps in the trace should be separated by semicolons, and the cycle part should be enclosed in curly braces, preceeded by the keyword "cycle".

Temporal operators:
X: Next operator
U: Until operator

Logical operators:
&: AND operator
|: OR operator
!: NOT operator
The output trace is a symbolic trace, which means that the logical operators are allowed, but not temporal operators.

Constants:
0: False
1: True
Note that other numbers are invalid.

Example 1
Formula: X((a & Xa) U XXb)
Trace: 1; 1; 1; b; cycle{{1}}

Example 2
Formula: !c U X(1 U b)
Trace: 1; b; cycle{{1}}

Example 3
Formula: X!X!(b & Xb)
Trace: 1; 1; b; b; cycle{{1}}

Example 4
Formula: !(1 U !c)
Trace: cycle{{c}}

Your Turn
Formula: {formula}
Please generate the corresponding trace. Output the trace only.
\end{lstlisting}

\vspace{0.5em}

\begin{lstlisting}[caption={LLM Prompt for the propositional logic task.},captionpos=t,label={lst:llm-prompt-prop}]
Your task is to generate an assignment that satisfies a given propositional logic formula.
Lowercase letters denote the atomic propositions.
The output is a JSON object representing the assignment.

Logical operators (ordered from highest precedence to lowest):
!: NOT operator
&: AND operator
|: OR operator
xor: Exclusive OR operator
<->: Logical equivalence operator (biconditional)

Constants:
0: False
1: True
Note that other numbers are invalid.

Example 1
Formula: !a | c & (b <-> c)
Assignment: { "a": false }

Example 2
Formula: !(a <-> (!a xor !e))
Assignment: { "a": true, "e": true }

Example 3
Formula: a & (!a <-> !c | d)
Assignment: { "a": true, "c": true, "d": false }

Example 4
Formula: !(a | !(!d | b & d))
Assignment: { "a": false, "d": false }

Your Turn
Formula: {formula}
Please generate an assignment that satisfies this formula. Output the assignment only, in JSON format.
\end{lstlisting}

%% file: appendices/topn.tex
\section{Top-N Accuracy}
\label{app:topn}

To compute the top-N accuracy, we generate multiple candidate solutions for each input formula by using beam search with a beam width of N. A candidate solution is considered correct if it satisfies the input formula. The top-N accuracy is then calculated as the percentage of input formulas for which at least one of the N candidate solutions is correct.

The top-N accuracy on the propositional logic heatmaps is shown in Figure~\ref{fig:prop-topn-heatmaps} for the best proposed model.

\begin{figure}[htbp]
  \centering
\includegraphics[width=\textwidth]{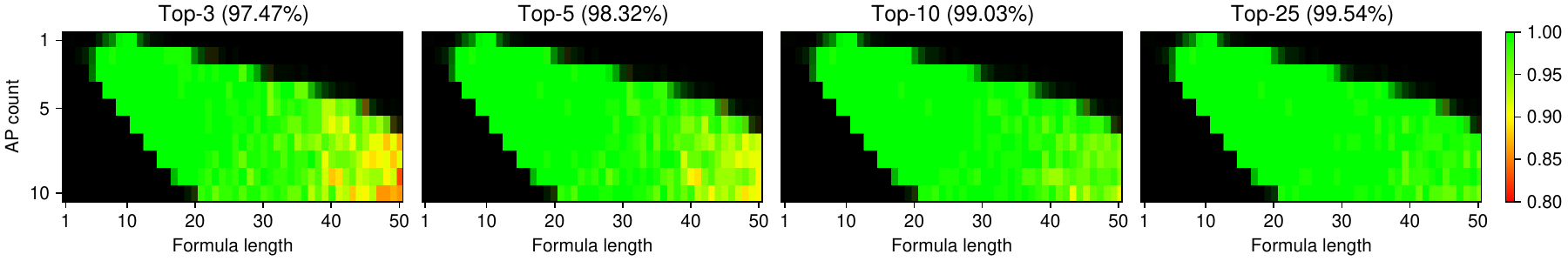}
  \caption{
  Top-N accuracy heatmaps for the best proposed model on the propositional logic task.
  Note that unlike other heatmaps, the color scale starts from 0.8 to better visualize differences at high accuracy levels.
  }
  \label{fig:prop-topn-heatmaps}
\end{figure}

%% file: appendices/computational_cost.tex
\section{Computational Cost Analysis}
\label{app:computational-cost}

In this appendix, we provide an analysis of the computational cost of our proposed method during inference.
As discussed in Section~\ref{sec:experiments:efficiency}, our architecture employs $S$ parallel streams to handle interchangeable tokens, leading to a time complexity of $O(SL^2)$ for a sequence of length $L$.
This is in contrast to the standard attention mechanism's $O(L^2)$ complexity.
To empirically validate this scaling behavior, we measure the average time required to generate a single sample during inference as a function of the number of interchangeable tokens (atomic propositions, APs) on the propositional logic task.
Figure~\ref{fig:ap-time-scaling} presents the results of this profiling.
As seen in the figure, the time scales linearly with the number of APs, consistent with the theoretical analysis.

\begin{figure}[htbp]
  \centering
\includegraphics[width=0.65\textwidth]{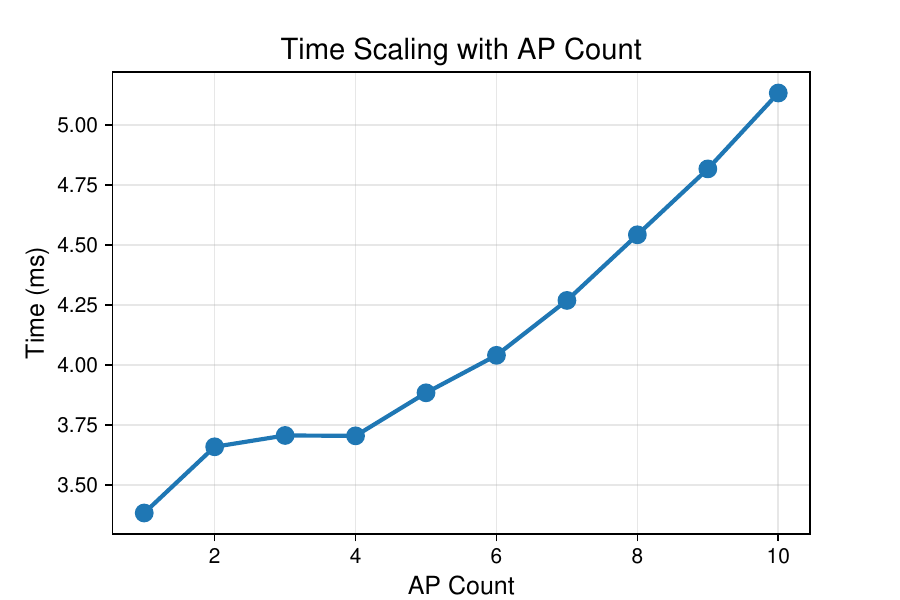}
  \caption{
  Average time required to generate a single sample during inference as a function of the number of interchangeable tokens (APs) for our proposed method. The measurements were taken on the propositional logic task using a single NVIDIA L40S GPU.
  }
  \label{fig:ap-time-scaling}
\end{figure}

%% file: appendices/finetuning.tex
\clearpage
\section{Heatmaps for Converted \& Fine-Tuned Models}
\label{app:finetuning}

This appendix presents the full out-of-distribution heatmaps for the fine-tuned models described in Section~\ref{sec:experiments:finetuning}.
Each heatmap visualizes model accuracy across formula length (x-axis, 1--50) and argument pack count (y-axis), using a red-green colormap.

\begin{figure}[htbp]
  \centering
  \includegraphics[width=\textwidth]{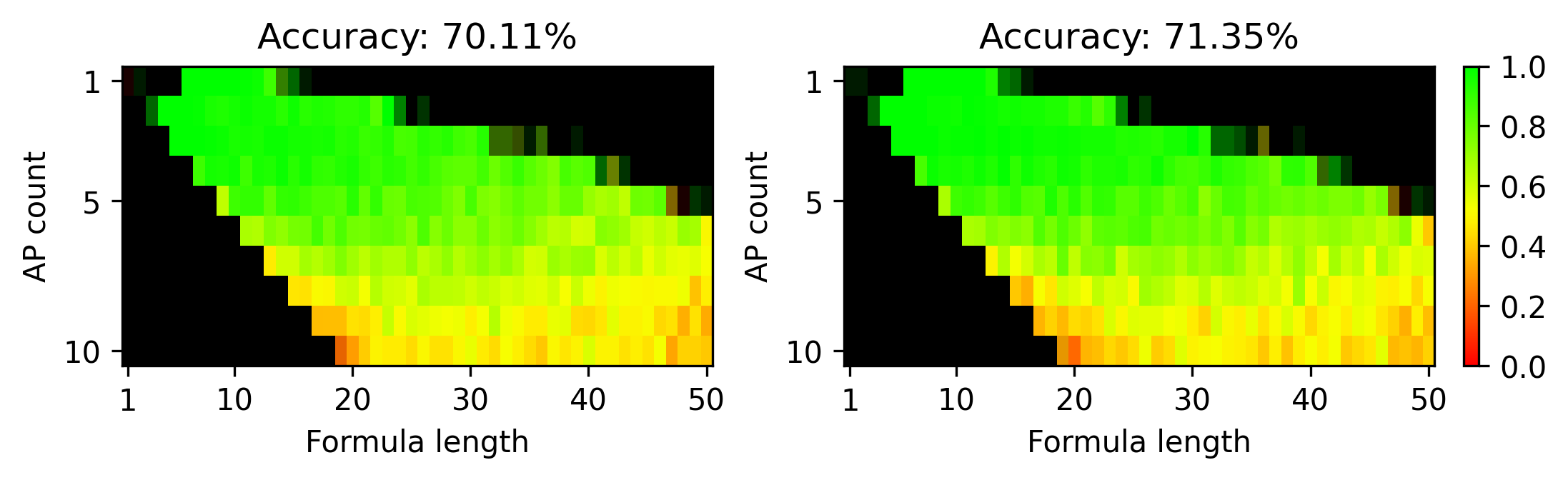}
  \\[-1.5em]
  \noindent\makebox[\textwidth]{%
    \hfill
    \begin{minipage}[t]{0.3\textwidth}
      \subcaption{Converted \& fine-tuned for 1 epoch.}
      \label{fig:prop-ft-1epoch}
    \end{minipage}
    \hspace{6.66em}
    \begin{minipage}[t]{0.3\textwidth}
      \subcaption{Converted \& fine-tuned for 5 epochs.}
      \label{fig:prop-ft-5epochs}
    \end{minipage}
    \hfill
  }
  \caption{
    Heatmaps for the fine-tuned models on the propositional logic task.
    Both models are converted from the same pre-trained baseline.
    After one epoch of fine-tuning (\subref{fig:prop-ft-1epoch}), the model approaches the from-scratch baseline (see \cref{finetuning-table}).
    Five epochs of fine-tuning (\subref{fig:prop-ft-5epochs}) further close the gap.
  }
  \label{fig:prop-ft}
\end{figure}

\begin{figure}[htbp]
  \centering
  \includegraphics[width=\textwidth]{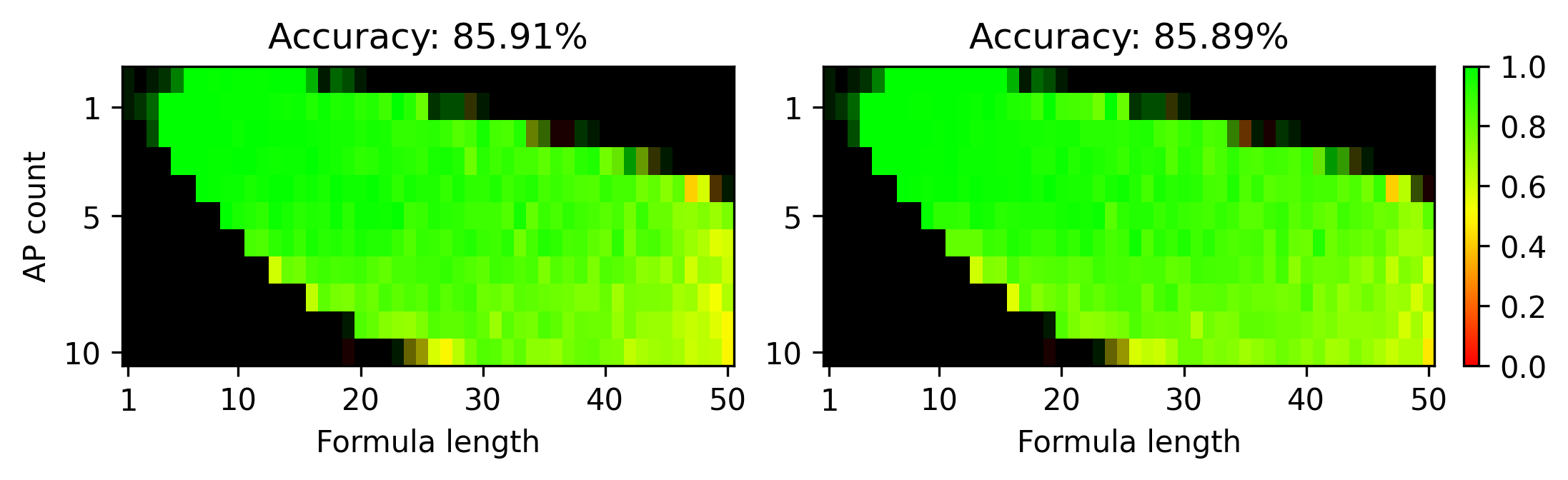}
  \\[-1.5em]
  \noindent\makebox[\textwidth]{%
    \hfill
    \begin{minipage}[t]{0.3\textwidth}
      \subcaption{Converted \& fine-tuned for 1 epoch.}
      \label{fig:ltl-ft-1epoch}
    \end{minipage}
    \hspace{6.66em}
    \begin{minipage}[t]{0.3\textwidth}
      \subcaption{Converted \& fine-tuned for 5 epochs.}
      \label{fig:ltl-ft-5epochs}
    \end{minipage}
    \hfill
  }
  \caption{
    Heatmaps for the fine-tuned models on the LTL task.
    Both models are converted from the same pre-trained baseline.
    After one epoch of fine-tuning (\subref{fig:ltl-ft-1epoch}), the model already surpasses the from-scratch baseline (see \cref{finetuning-table}).
    Five epochs of fine-tuning (\subref{fig:ltl-ft-5epochs}) yield comparable performance.
  }
  \label{fig:ltl-ft}
\end{figure}